\documentclass[sigconf]{acmart}
\usepackage{bm} 
\usepackage{algorithm}
\usepackage{algorithmic}
\usepackage{amsthm}
\usepackage{wrapfig}
\usepackage{enumitem}
\usepackage{microtype}
\usepackage{multirow}

\newtheorem{remark}{Remark}

\AtBeginDocument{%
  }

\copyrightyear{2026}
\acmYear{2026}
\setcopyright{cc}
\setcctype{by}
\acmConference[KDD '26]{Proceedings of the 32nd ACM SIGKDD Conference on Knowledge Discovery and Data Mining V.2}{August 09--13, 2026}{Jeju Island, Republic of Korea}
\acmBooktitle{Proceedings of the 32nd ACM SIGKDD Conference on Knowledge Discovery and Data Mining V.2 (KDD '26), August 09--13, 2026, Jeju Island, Republic of Korea}
\acmDOI{10.1145/3770855.3818906}
\acmISBN{979-8-4007-2259-2/2026/08}
\acmSubmissionID{v2ais267}

\begin{document}

\title{NPSolver: Neural Poisson Solver with Iterative Physics Supervision}


\author{Bocheng Zeng}
\authornote{Equal contributions.}
\affiliation{%
  \department{Gaoling School of Artificial Intelligence}
  \institution{Renmin University of China}
  \city{Beijing}
  \country{China}}
\email{zengbocheng@ruc.edu.cn}

\author{Rui Zhang}
\authornotemark[1]
\affiliation{%
  \department{Gaoling School of Artificial Intelligence}
  \institution{Renmin University of China}
  \city{Beijing}
  \country{China}}
\email{rayzhang@ruc.edu.cn}

\author{Runze Mao}
\affiliation{%
  \department{School of Mechanics and Engineering Science}
  \institution{Peking University}
  \city{Beijing}
  \country{China}}
\email{maorz1998@stu.pku.edu.cn}

\author{Mengtao Yan}
\affiliation{%
  \department{Gaoling School of Artificial Intelligence}
  \institution{Renmin University of China}
  \city{Beijing}
  \country{China}}
\email{mengtaoyan@ruc.edu.cn}

\author{Xuan Bai}
\affiliation{%
  \institution{AI for Science Institute}
  \city{Beijing}
  \country{China}}
\email{xuan.bai@pku.edu.cn}

\author{Yang Liu}
\affiliation{%
  \department{School of Engineering Science}
  \institution{University of Chinese Academy of Sciences}
  \city{Beijing}
  \country{China}}
\email{liuyang22@ucas.ac.cn}

\author{Zhi X. Chen}
\affiliation{%
  \department{School of Mechanics and Engineering Science}
  \institution{Peking University}
  \city{Beijing}
  \country{China}}
\email{chenzhi@pku.edu.cn}

\author{Hao Sun}
\authornote{Corresponding author.}
\affiliation{%
  \department{Gaoling School of Artificial Intelligence}
  \institution{Renmin University of China}
  \city{Beijing}
  \country{China}}
\email{haosun@ruc.edu.cn}

\renewcommand{\shortauthors}{Bocheng Zeng et al.}

\begin{abstract}
Efficiently solving Poisson equations on complex, irregular domains remains a fundamental challenge in scientific computing, as classical iterative solvers often suffer from prohibitive runtime due to ill-conditioned systems. While neural operators offer a fast alternative, they typically rely on large-scale labeled datasets or struggle with unstable training dynamics when using physics-informed residual losses. We propose \textsc{NPSolver}, a neural Poisson solver trained without solution labels via iterative physics supervision. Instead of relying on fully converged numerical solutions or raw PDE residuals, \textsc{NPSolver} utilizes a small number of preconditioned conjugate gradient (PCG) steps to refine its own predictions, providing a more stable and well-scaled training signal. Theoretical analysis confirms that this iterative supervision serves as a well-conditioned error proxy and that a stop-gradient design is essential for optimization stability. To better capture boundary-driven features under mixed boundary conditions, we further introduce the Boundary-Aware Transolver (\textsc{BA-Transolver}) architecture that explicitly separates interior and boundary tokenization. Extensive evaluations on 2D and 3D irregular geometries demonstrate that \textsc{NPSolver} outperforms both physics-informed and data-driven baselines. Furthermore, a downstream thermal control task highlights the model's capability for conducting efficient and reliable gradient-based boundary control. We will release our codes and data at https://github.com/intell-sci-comput/NPSolver.

\end{abstract}

\begin{CCSXML}
<ccs2012>
  <concept>
    <concept_id>10010147.10010178</concept_id>
    <concept_desc>Computing methodologies~Artificial intelligence</concept_desc>
    <concept_significance>500</concept_significance>
  </concept>
  <concept>
    <concept_id>10010147.10010341</concept_id>
    <concept_desc>Computing methodologies~Modeling and simulation</concept_desc>
    <concept_significance>500</concept_significance>
  </concept>
  <concept>
    <concept_id>10010405.10010432.10010441</concept_id>
    <concept_desc>Applied computing~Physics</concept_desc>
    <concept_significance>300</concept_significance>
  </concept>
</ccs2012>
\end{CCSXML}

\ccsdesc[500]{Computing methodologies~Artificial intelligence}
\ccsdesc[500]{Computing methodologies~Modeling and simulation}
\ccsdesc[300]{Applied computing~Physics}


\keywords{Poisson equation, Neural PDE solver, Iterative physics supervision, Physics-informed machine learning, Surrogate modeling}


\maketitle

\section{Introduction}
Efficiently solving Poisson equations on complex and irregular domains remains a major computational bottleneck in many scientific and engineering pipelines. As a cornerstone PDE, the Poisson operator underlies a wide range of physical processes, including pressure projection in incompressible flows~\cite{chorin1968}, steady-state heat conduction~\cite{patankar1980numerical}, and
electrostatics~\cite{jackson1998classical}. Classical methods discretize the PDE using numerical schemes such as finite volumes~\cite{Moukalled2016}, finite elements~\cite{Zienkiewicz2013}, or finite differences~\cite{LeVeque2007}, resulting in a large, sparse linear system solved by iterative methods, e.g., preconditioned conjugate gradient (PCG)~\cite{cg1952, saad2003} or multigrid~\cite{briggs2000, TrotMult2001}. While these solvers are robust, their runtime is dominated by the iteration count, which increases sharply for irregular geometries and challenging boundary conditions (BCs) that induce ill-conditioned systems.

To accelerate classical solvers, learning-based approaches have emerged as a promising alternative, using neural networks as fast surrogate models. This paradigm includes operator-learning methods~\cite{fno2021, deeponet2021, ffno2023, uno2023, Wan_Wang_Mi_Zhang_Sun_2026} trained on paired input--solution data, as well as graph- and transformer-based surrogates~\cite{gino2023, wu2024Transolver, luo2025transolverplus} that better handle irregular meshes and unstructured grids. Despite their fast inference, most such methods rely on large collections of ground-truth solutions generated by expensive numerical solvers, which can be prohibitive for complex geometries and diverse BCs. 

To reduce the dependence on labeled data, Physics-Informed Neural Networks (PINNs)~\cite{raissi2018dhpm, xpinn2020, gpinn2022} replace data supervision with physics losses, e.g., PDE residuals and boundary constraints. However, optimizing physics-informed objectives is often unstable and costly in practice. On the one hand, PINNs can struggle with stiff operators and ill-conditioned regimes, leading to slow convergence and high sensitivity to hyperparameters~\cite{krishnapriyan2021characterizing,wang2022and}. On the other hand, PINNs typically require per-instance optimization, necessitating retraining when the domain geometry or BCs change~\cite{grossmann2024can}. Recent physics-informed operator learning methods~\cite{pino, wang2021pideeponet, zhang2025omnifluid} further extend this idea by using physics supervision to train neural operators, thereby avoiding the per-instance retraining like PINNs. However, these methods still rely on residual-based objectives (strong-form residuals via automatic differentiation~\cite{wang2021pideeponet} or discretized PDE residuals~\cite{pino}), where gradients can be poorly scaled in stiff or ill-conditioned regimes, leading to slow convergence and unstable training dynamics. Moreover, many existing approaches are developed under regular-grid backbones and simple geometry, which limits their ability to generalize across BCs and complex, irregular domains.

We propose \textsc{NPSolver}, a neural Poisson solver trained \emph{without} solution labels via iterative physics supervision. During training, \textsc{NPSolver} applies a small number ($K$) of PCG steps to the network prediction and uses the PCG-updated iterate as the supervision target. This bypasses the need for fully converged numerical labels that typically require $T$ iterations, where $K\ll T$. Compared with physics-informed methods that directly minimize PDE residuals, \textsc{NPSolver} adopts iterative physics supervision, which provides a more stable and well-scaled training signal. We further provide theoretical guarantees showing that the induced self-consistency residual is a well-conditioned error proxy and that the stop-gradient design yields more favorable optimization dynamics. For architecture design, to improve generalization across domain geometries and BCs, we introduce a Boundary-Aware Transolver variant (\textsc{BA-Transolver}) that can handle various boundary-value problems on irregular meshes. BA-Transolver separately tokenizes interior and boundary nodes and performs attention over the combined token set to explicitly model complex boundary--interior interactions. 

We validate \textsc{NPSolver} on three challenging settings: (i) 2D Poisson problems on irregular domains with varying forcing fields and boundary regimes, (ii) 3D Poisson problems on a cube-with-cylindrical-hole family, and (iii) a downstream thermal control task on a perforated plate. We evaluate both \textit{in-distribution} accuracy and \textit{out-of-distribution} generalization under geometry shifts and boundary-condition changes. In the most challenging RandomBC regime, \textsc{NPSolver} achieves \textbf{8.17\%} relative $L_2$ error without any paired labels, outperforming the strongest supervised baseline trained with \textbf{7k} labeled samples (\textbf{10.17\%}). Moreover, it delivers a \textbf{15$\times$} speedup at matched accuracy compared with classical numerical methods. In summary, we make the following contributions:

1. We present \textsc{NPSolver}, a label-free neural Poisson solver built on \textsc{BA-Transolver}, a boundary-aware attention model designed to handle complex BCs. \textsc{NPSolver} is trained with an iterative physics supervision objective, avoiding the cost of generating fully converged solution pairs.

2. We provide theoretical guarantees for iterative physics supervision. The induced self-consistency residual is a well-conditioned error proxy, and the {stop-gradient} design leads to more stable and well-scaled training than directly minimizing PDE residuals.

3. We conduct comprehensive evaluations in 2D and 3D irregular domains, and a downstream control task, demonstrating that \textsc{NPSolver} offers a favorable error–cost trade-off and strong \textit{out-of-distribution} generalization for Poisson equations.

\section{Related Work}

{\textbf{Supervised neural surrogates.}} Given enough training data, a prominent research direction involves learning solution operators in a data-driven manner. Neural operator models, such as FNO~\cite{fno2021}, DeepONet~\cite{deeponet2021}, and their invariants~\cite{ffno2023, li2023gfno, LU2022114778}, learn direct mappings from problem inputs to solution fields and have demonstrated strong performance across benchmark PDE families. To handle irregular geometries and unstructured discretizations, graph-based and geometric models~\cite{pfaff_mgn, gino2023, li2025,zeng2025phympgn} operate directly on meshes or point clouds. Transformer-based surrogates~\cite{li2023trans, wu2024Transolver, luo2025transolverplus, han2022predicting} leverage attention mechanisms to model global interactions within the solution domain. Despite their inference speed, these supervised approaches typically require massive datasets of paired solution labels generated by high-fidelity numerical solvers. 

\noindent{\textbf{Physics-informed learning.}} Classical Physics-Informed Neural Networks (PINNs) and their variants~\cite{raissi_pinn, raissi2018dhpm, xpinn2020, gpinn2022} optimize networks by minimizing strong-form PDE residuals alongside boundary condition penalties. While these have shown promising results across various physical systems, a complementary family of approaches \cite{weinan2018, dgm2018, vpinn2019, hpvpinn2021} leverages weak forms or variational principles to improve training stability and reduce sensitivity to higher-order derivatives. Despite being label-free, these physics-informed methods are typically optimized per instance, i.e., requiring a separate optimization process for each new problem configuration. To develop reusable neural solvers, physics-informed operator methods combine operator learning with physics-based objectives~\cite{pino, wang2021pideeponet, li2024genfvgn,zhang2025omnifluid, zhang2025monte}, aiming for operator-level generalization. However, many current physics-informed operator frameworks still rely on regular-grid backbones or assume fixed geometries. This reliance hinders their direct application and limits their ability to generalize to irregular domains.

\noindent{\textbf{Learning-based iterative solvers.}} A distinct line of research aims to reduce the computational cost of iterative algorithms by learning specialized update rules or integrating neural predictors with classical solvers. LISTA~\cite{lista2010} learns a finite sequence of iterative updates by unrolling an optimization algorithm into neural layers, while implicit fixed-point models like DEQ~\cite{deq2019} define networks via equilibrium equations solved through root-finding during inference. For PDEs, \citeauthor{hsieh2018learning} learn neural modifications to iterative solvers that maintain convergence guarantees, and HINTS~\cite{zhang2024hints} blends neural operators with relaxation methods to refine solutions iteratively. In contrast, we run a small number of PCG steps only during training to generate physics self-supervision targets, producing predictions in a single forward pass during inference.

\section{Methods}
\subsection{Problem Formulation}
\label{sec:problem}
We consider the Poisson equation in $d\in\{2,3\}$ spatial dimensions on a computational domain $\Omega \subset \mathbb{R}^d$ bounded by $\partial\Omega$. Given a forcing field $f:\Omega \rightarrow \mathbb{R}$, a Poisson solver aims to find a scalar field $u:\Omega \rightarrow \mathbb{R}$ such that
\begin{equation}
\begin{aligned}
\nabla^2 u(\bm {x}) &= f(\bm {x}), & \bm {x} \in \Omega, \\
\mathcal{B}(u)(\bm {x}) &= g(\bm {x}), & \bm {x} \in \partial\Omega
\end{aligned}
\end{equation}
where $\mathcal{B}$ denotes the boundary operator. In this work, we focus on Dirichlet and Neumann boundary conditions (BCs). Specifically, we prescribe $u(\bm {x}) = g_D(\bm {x}), \bm {x}\in  \partial\Omega_D$ and $
\frac{\partial u}{\partial \bm {n}}(\bm {x}) = g_N(\bm {x}), \bm {x}\in \partial\Omega_N$
with $\partial\Omega = \partial \Omega_D \cup \partial\Omega_N$ and $\partial\Omega_D \cap \partial \Omega_N = \emptyset$, with $\bm {n}$ representing the outward unit normal.

\begin{wrapfigure}{r}{0.46\columnwidth}
  \vspace{-10pt}
  \centering
  \includegraphics[width=0.46\columnwidth]{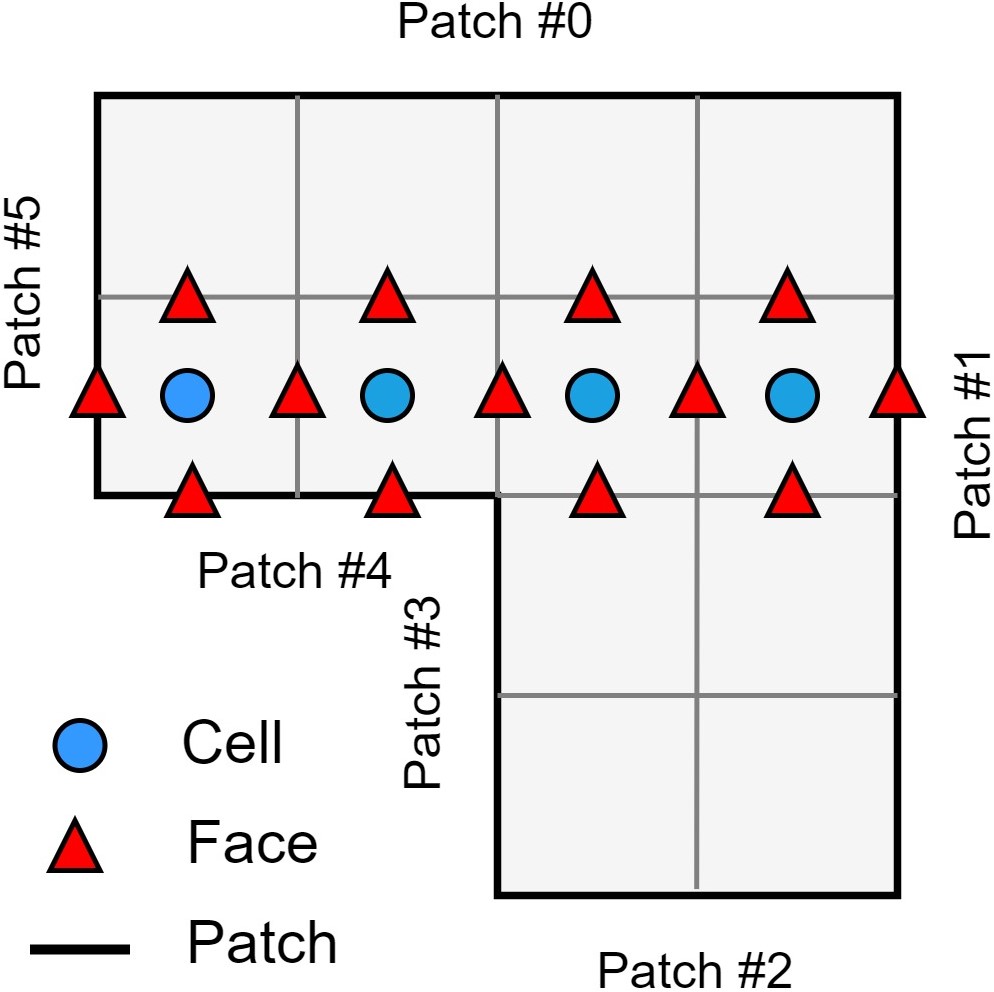}
  \caption{Cell-centered FVM mesh: cell, face, and patch.}
  \label{fig:fvm_cell_face}
  \vspace{-8pt}
\end{wrapfigure}
To obtain a discrete numerical solution, we discretize the domain and the governing equation using a cell-centered finite volume method (FVM). As illustrated in Fig.~\ref{fig:fvm_cell_face}, the domain $\Omega$ is partitioned into a set of control volumes $\{V_i\}_{i=1}^{N}$, where the primary unknowns are stored at the corresponding cell centroids $\{\bm{x}_i\}_{i=1}^{N} \in \Omega$. BCs are imposed at the boundary-face centroids $\{\bm{y}_i\}_{i=1}^{N_b} \in \partial \Omega$. Applying the finite-volume integration and standard flux approximations transforms the continuous PDE into a sparse linear system:
\begin{equation}
\bm A \bm u = \bm b
\end{equation}
where $\bm u \in \mathbb{R}^{N}$ is the vector of discrete cell-centered solutions, $\bm A \in \mathbb{R}^{N\times N}$ is the discrete Laplacian operator (incorporating mesh geometry and boundary treatment), and $\bm b \in \mathbb{R^{N}}$ aggregates the contributions from the source term $f$ and the boundary data $g$. The resulting sparse system is typically solved using an iterative Preconditioned Conjugate Gradient (PCG) method, which often requires hundreds or thousands of iterations to reach full convergence.

In this paper, we aim to learn a neural solution operator $\mathcal{N}_\theta$ that maps the domain geometry $\Omega$, boundary condition $B$, and the forcing field $f$ directly to the corresponding solution field, i.e.,
\begin{equation}
\mathcal{N}_\theta: (\Omega, B, f)\mapsto \bm u.
\end{equation}

\subsection{The Overview of \textsc{NPSolver}}

We propose \textsc{NPSolver}, a label-free neural Poisson solver trained via iterative physics supervision (Fig.~\ref{fig:model}). For each training instance, we sample a triplet $(\Omega,B,f)$ on the fly, where $\Omega$ is a geometry, $B$ is a boundary condition, and $f$ is a forcing field. Given $(\Omega,B,f)$, the network $\mathcal{N}_\theta$ predicts a cell-centered solution $\hat{\bm u}$. Unlike directly using PDE residual~\cite{pino}, \textsc{NPSolver} conducts $K$ steps of PCG initialized at $\hat{\bm u}$ to obtain a self-supervision target $\tilde{\bm u}=F_K(\hat{\bm u})$, and minimizes $\|\hat{\bm u}-\tilde{\bm u}\|_2^2$ while stopping gradients through $F_K$.
This objective enjoys stronger theoretical guarantees and alleviates the ill-conditioning issues that arise when directly minimizing the PDE residual (see Sec.~\ref{sec:loss}). To improve generalization under irregular meshes and mixed BCs, we introduce a Boundary-Aware Transolver (\textsc{BA-Transolver}).
Unlike \textsc{Transolver}~\cite{wu2024Transolver}, \textsc{BA-Transolver} tokenizes interior and boundary nodes separately and performs attention jointly,  which improves modeling under mixed BCs and geometry shifts through boundary--interior interactions (Sec.~\ref{sec:architecture}).

\subsection{Model Architecture}
\label{sec:architecture}
\begin{figure*}[t]
    \centering
    \includegraphics[width=0.99\textwidth]{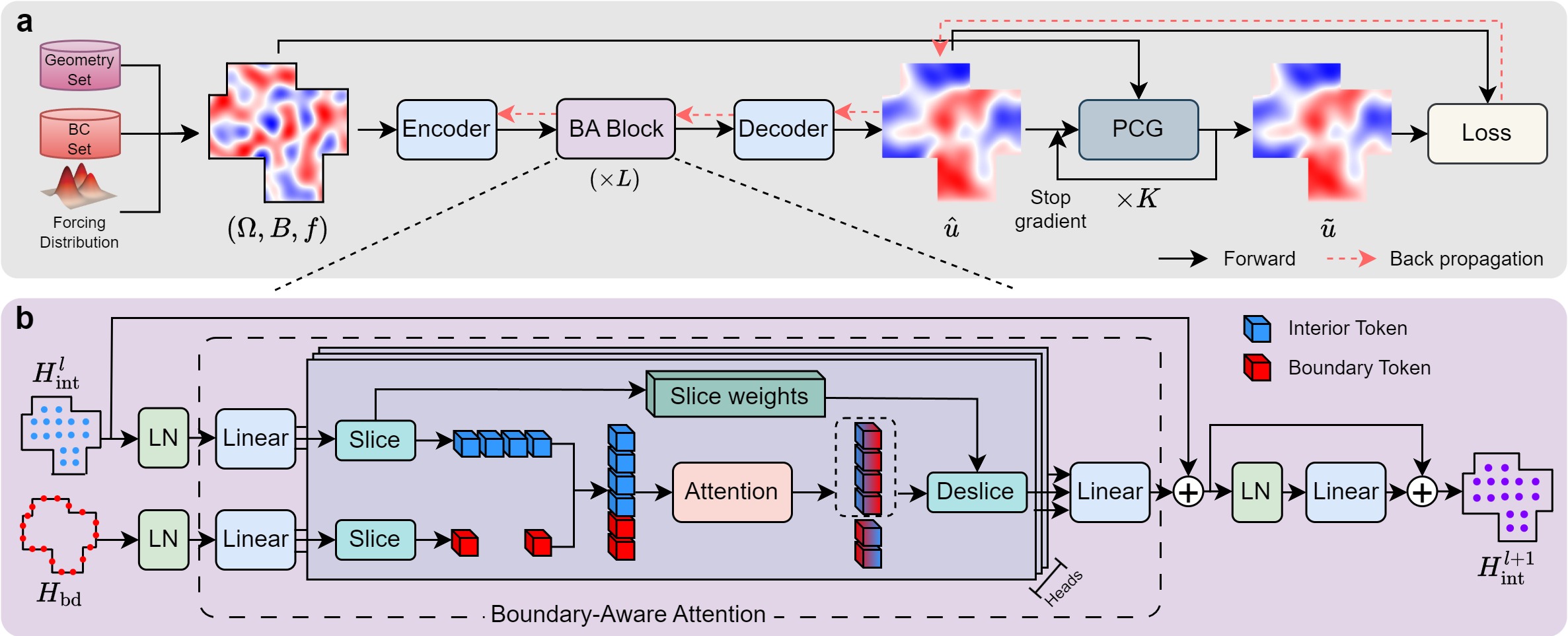}
    \caption{Overview of \textsc{NPSolver}. (a) Iterative Physics Supervision: The network predicts $\hat{\bm u}$ for a sampled instance ($\Omega, B, f$), which is refined by $K$ PCG steps to generate a self-supervision target $\tilde{\bm u}$ (with stop-gradient). (b) \textsc{BA-Transolver} Architecture: Interior and boundary nodes are respectively tokenized into $\bm Z_{int},\ \bm Z_{bd}$ and attended jointly to model boundary-interior interactions.}
    \label{fig:model}
    \vspace{-5pt}
\end{figure*}

Our network architecture (\textsc{BA-Transolver}) follows an encoder-block-decoder design (Fig.~\ref{fig:model}). The encoder maps physical inputs to latent node embeddings, a stack of $L$ Boundary-Aware (BA) blocks performs global message passing in a compact token space, and the decoder projects the updated embeddings back to the solution. 

The core innovation is the Boundary-Aware Attention module within each BA block. This module extends the "physics-attention" mechanism of \textsc{Transolver}~\cite{wu2024Transolver} by introducing two separate token streams. Unlike the original \textsc{Transolver}, which applies a shared slicing operation across all nodes, our approach separately tokenizes interior and boundary nodes before performing attention over their concatenation. This explicitly exposes boundary information in the token space, significantly improving sensitivity to mixed BCs and geometric shifts.

Given a cell-centered finite-volume mesh of domain $\Omega$, we define interior nodes $\{\bm{x}_i\}_{i=1}^{N}$ at cell centroids and boundary nodes $\{\bm{y}_j\}_{j=1}^{N_b}$ at boundary-face centroids. We construct interior node features $\bm p_i$ from spatial coordinates, and the forcing $f$, and boundary node features $\bm{q}_j$ from spatial coordinates, BC type, and the prescribed values. The initial stage of the \textsc{BA-Transolver} involves mapping physical features into a latent space while maintaining the distinction between the domain interior and its boundaries. Two encoders produce interior embeddings $\bm h^{\text{int}}_i=\text{Enc}_{\text{int}}(\bm p_i)$ and boundary embeddings $\bm h^{\text{bd}}_j=\text{Enc}_{\text{bd}}(\bm q_j)$, respectively. Collecting these embeddings into matrices $\bm H_\text{int} \in \mathbb{R}^{N\times d}$ and $\bm H_\text{bd} \in \mathbb{R}^{N_b\times d}$, we learn two sets of slice-assignment weights for interior and boundary nodes, $\bm W_{\text{int}} \in \mathbb{R}^{N\times S_\text{int}}$ and $\bm W_{\text{bd}} \in \mathbb{R}^{N_b\times S_\text{bd}}$, via two separate projection heads:
\begin{equation}
    \bm{W}_\text{int} = \text{Softmax}(\bm \Phi_\text{int}(\bm H_{\text{int}})), \quad \bm{W}_\text{bd} = \text{Softmax}(\bm \Phi_\text{bd}(\bm H_{\text{bd}})),
\end{equation}
where the softmax is applied along the slice dimension. This dual-stream approach ensures that boundary information is not overlooked by the much larger number of interior nodes. Using these assignments, we aggregate node embeddings into slice tokens by weighted pooling:
\begin{equation}
    \bm Z_\text{int} = \text{Pooling}(\bm W_\text{int}, \bm H_\text{int}), \quad
    \bm Z_\text{bd} = \text{Pooling}(\bm W_\text{bd}, \bm H_\text{bd})
\end{equation}
where $\text{Pooling}(\bm W, \bm H)$ denotes normalized weighted sums over nodes. We concatenate the two token sets and apply multi-head self-attention (MHA) to achieve boundary--interior interactions:
\begin{equation}
    \bm Z = [\bm Z_\text{int}; \bm Z_\text{bd}],\quad \bm Z' = \text{MHA}(\bm Z).
\end{equation}
Finally, we update only the interior node embeddings by deslicing from the updated interior tokens $\bm Z'_\text{int}$:
\begin{equation}
    \bm H'_\text{int} = \text{Deslice}(\bm W_\text{int}, \bm Z'_\text{int}).
\end{equation}
In this design, boundary tokens act as a persistent conditioning context. They inject boundary information into the global interactions while keeping the boundary representation anchored to the prescribed BCs. Each BA block follows a pre-norm transformer structure with residual connections and a feed-forward layer.

\subsection{Iterative Physics Supervision}
\label{sec:loss}

As described in Sec.~\ref{sec:problem}, our numerical reference pipeline discretizes the Poisson problem via a cell-centered FVM, yielding a sparse linear system $\bm A \bm u = \bm b$. While solutions are obtainable using an iterative PCG solver, challenging geometries often require a large number of iterations to satisfy strict convergence tolerances. We exploit this discretize-solve workflow to construct a physics supervision signal that requires only a small number of iterative steps during training.

Given an instance $(\Omega, B, f)$, the network predicts a cell-centered field $\hat{\bm u}=\mathcal{N}_\theta(\Omega,B,f)$. Rather than supervising $\hat{\bm u}$ with a computationally expensive converged label, we run $K$ steps of PCG initialized at $\hat{\bm u}$, and denote the resulting $K$-step update operator by $F_K(\cdot)$. We treat the $K$-step iterate as a pseudo-label,
\begin{equation}
\tilde{\bm u}=\operatorname{sg}\!\left(F_K(\hat{\bm u})\right),
\end{equation}
where $\operatorname{sg}(\cdot)$ denotes the stop-gradient operator. The model is trained by minimizing the discrepancy between the prediction and its PCG-updated counterpart:
\begin{equation}
\label{eq:loss_func}
L_{\text{iter}}(\theta)=\left\|\hat{\bm u}-\tilde{\bm u}\right\|_2^2.
\end{equation}
During training, PCG acts purely as a target generator, where gradients are not propagated through $F_K(\cdot)$, and backpropagation is performed only through the neural network.

Minimizing $L_{\text{iter}}$ enforces a self-consistency condition induced by the iterative solver, i.e., $\hat{\bm u}\approx F_K(\hat{\bm u})$. Near convergence, additional PCG steps produce only small corrections; conversely, when $\hat{\bm u}$ is far from satisfying $\bm A\bm u=\bm b$, the PCG update provides a nontrivial, physics-consistent correction direction. In this way, iterative physics supervision trains the network to output a state that is already close to the converged solution in a single forward pass, amortizing a substantial portion of the iterative solve into inference.

One common physics-informed baseline directly minimizes the discrete residual, e.g., $L_{\text{res}}=\|\bm A\hat{\bm u}-\bm b\|$. In our setting, $\bm A$ can be ill-conditioned on irregular meshes, making residual-based optimization slow and unstable due to poorly scaled gradients. In contrast, $L_{\text{iter}}$ learns from the solver's update in the solution space, which implicitly incorporates the effect of preconditioning and yields a more effective training signal toward the converged manifold. We analyze these properties theoretically in Sec.~\ref{sec:theory} and validate them empirically in Sec.~\ref{sec:ablation}.

{Although we instantiate iterative supervision with PCG in the Poisson/SPD setting, the high-level idea only requires a truncated solver map $F_K$. In principle, $F_K$ could be replaced by other Krylov solvers matched to the algebraic structure of the system, such as GMRES or MINRES, although our current theory is specific to the SPD case. We use a Jacobi preconditioner in this work because it is simple, cheap, and robust on varying irregular meshes, which helps isolate the effect of iterative supervision itself. Stronger SPD preconditioners may further improve conditioning and reduce the required K, but they also introduce additional setup cost and solver-specific complexity.}


\section{Theory}
\label{sec:theory}
This section establishes three theoretical results: (i) fixed-point consistency of the PCG-$K$ map, (ii) the analysis of stop-gradient, and (iii) the self-consistency residual is a well-conditioned error proxy. All proofs are provided in Appendix~\ref{app:theory}.

\subsection{Setup}
After discretization, each Poisson equation yields a linear system $\bm{A}\bm{u}=\bm{b}$. Throughout this section, we assume:

\textit{(A1)} $\bm{A}$ is symmetric positive definite (SPD);

\textit{(A2)} the preconditioner $\bm{M}$ used by PCG is SPD.

Define the self-consistency residual $\bm{s}_K(\bm{u}) := \bm{u}-F_K(\bm{u})$. Let $\bm{u}^\star := \bm{A}^{-1}\bm{b}$ be the unique solution. For SPD $\bm{A}$, define the energy norm $\|\bm{x}\|_{\bm{A}} := \sqrt{\bm{x}^\top\bm{A}\bm{x}}$.
Define the preconditioned matrix $\bm{C} := \bm{M}^{-1/2}\bm{A}\bm{M}^{-1/2}$
and $\kappa := \kappa(\bm{C}) = \lambda_{\max}(\bm{C})/\lambda_{\min}(\bm{C})$.
Accordingly, let
\begin{equation}
\rho := \frac{\kappa-1}{\kappa+1}\in [0,1).
\end{equation}

\subsection{Fixed-point Consistency}
A basic concern for solver-induced self-supervision is whether the induced objective introduces spurious minimizers.
The following result shows that the only fixed points of the finite-iteration PCG map are exact solutions of the linear system.

\begin{theorem} Assume A1--A2 and $K\ge 1$. Then for any $\bm{u}\in\mathbb{R}^N$,
\begin{equation}
F_K(\bm{u})=\bm{u}\quad\Longleftrightarrow\quad \bm{A}\bm{u}=\bm{b}.
\end{equation}
\end{theorem}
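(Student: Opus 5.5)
We prove the two directions separately, using only the standard structure of PCG started from an arbitrary initial guess $\bm u_0=\bm u$. Recall that PCG sets
\[
\bm r_0=\bm b-\bm A\bm u_0,\qquad \bm z_0=\bm M^{-1}\bm r_0,\qquad \bm p_0=\bm z_0,
\]
and then iterates
\[
\alpha_k=\frac{\bm r_k^\top\bm z_k}{\bm p_k^\top\bm A\bm p_k},\qquad \bm u_{k+1}=\bm u_k+\alpha_k\bm p_k,
\]
with the usual updates of $\bm r_{k+1}$, $\bm z_{k+1}$ and $\bm p_{k+1}$. We adopt the standard convention that the iteration terminates, i.e.\ returns the current iterate, as soon as $\bm r_k=\bm 0$. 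This is necessary because otherwise $\alpha_k$ is $0/0$. We state this convention explicitly as part of the definition of $F_K$.

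\textbf{($\Leftarrow$)} Suppose $\bm A\bm u=\bm b$. Then $\bm r_0=\bm 0$, so by the termination convention $F_K(\bm u)=\bm u$. If instead one prefers the algebraic view, $\bm z_0=\bm p_0=\bm 0$ gives a zero update whenever the update is interpreted as $\bm 0$, and inductively every subsequent iterate stays at $\bm u$.

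\textbf{($\Rightarrow$)} Suppose $\bm A\bm u\neq\bm b$, so that $\bm r_0\neq\bm 0$. We show that $F_K(\bm u)\neq\bm u$ using the energy-norm optimality of PCG. Write $\bm e_k=\bm u_k-\bm u^\star$. The key step is the monotone decrease
\[
\|\bm e_{k+1}\|_{\bm A}^2=\|\bm e_k\|_{\bm A}^2-\frac{(\bm r_k^\top\bm z_k)^2}{\bm p_k^\top\bm A\bm p_k}.
\]
This follows from $\bm u_{k+1}=\bm u_k+\alpha_k\bm p_k$ together with the fact that $\alpha_k$ minimizes $\|\bm e_k+\alpha\bm p_k\|_{\bm A}^2$. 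The minimization gives $\alpha_k=\bm r_k^\top\bm p_k/\bm p_k^\top\bm A\bm p_k$, and then the standard identity $\bm r_k^\top\bm p_k=\bm r_k^\top\bm z_k$ (from $\bm r_k\perp\bm p_{k-1}$) yields the displayed formula. For $k=0$ we have $\bm r_0^\top\bm z_0=\bm r_0^\top\bm M^{-1}\bm r_0>0$ because $\bm M$ is SPD by (A2). Moreover $\bm p_0=\bm z_0\neq\bm 0$, so $\bm p_0^\top\bm A\bm p_0>0$ by (A1). Hence the first step strictly decreases the energy error.

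Every later step with $\bm r_k\neq\bm 0$ also decreases the energy error, and a step at which the iteration has terminated leaves it unchanged, so the error never increases. Therefore
\[
\|F_K(\bm u)-\bm u^\star\|_{\bm A}<\|\bm u-\bm u^\star\|_{\bm A},
\]
which forces $F_K(\bm u)\neq\bm u$. An equivalent way to phrase this uses the Krylov characterization: $F_K(\bm u)$ minimizes the $\bm A$-norm error over $\bm u+\mathcal K_K(\bm M^{-1}\bm A,\bm z_0)$, and this affine space contains $\bm u+\alpha_0\bm z_0$, which is strictly better than $\bm u$.

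\textbf{Main obstacle.} The mathematics here is elementary. The only delicate point is making $F_K$ well defined when some residual $\bm r_k$ vanishes before step $K$, as happens at the true solution or upon early exact convergence. We handle this through the termination convention above and note that it does not affect the strict decrease at step $0$. A secondary point is to justify $\bm r_k^\top\bm p_k=\bm r_k^\top\bm z_k$ via the standard PCG orthogonality relations, which we cite rather than rederive.
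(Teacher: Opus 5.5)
Your proof is correct, but the nontrivial direction takes a different route from the paper's. The paper argues algebraically. It writes $F_K(\bm u)-\bm u=\sum_{i=0}^{K-1}\alpha_i\bm p_i$, tests this sum against $\bm A\bm p_0$, and uses the mutual $\bm A$-conjugacy $\bm p_0^\top\bm A\bm p_i=0$ for $i\ge 1$ to get $\alpha_0\,\bm p_0^\top\bm A\bm p_0=0$, which contradicts $\alpha_0>0$.

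You argue variationally instead. Each PCG step is an exact line search in the energy norm, so $\|\bm e_k\|_{\bm A}$ never increases and drops strictly at step $0$. This gives $\|F_K(\bm u)-\bm u^\star\|_{\bm A}<\|\bm u-\bm u^\star\|_{\bm A}$, so $F_K(\bm u)\neq\bm u$.

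Your route uses less of PCG's structure. It needs only the local identity $\bm r_k^\top\bm p_k=\bm r_k^\top\bm z_k$, which follows from the recurrences by a two-line induction: $\bm r_k^\top\bm p_k=\bm r_k^\top\bm z_k+\beta_{k-1}\bm r_k^\top\bm p_{k-1}$ and $\bm r_{k+1}^\top\bm p_k=\bm r_k^\top\bm p_k-\bm r_k^\top\bm z_k$. The paper's argument instead needs full conjugacy of all search directions. The same identity, with $\bm r_k^\top\bm z_k>0$, also shows $\bm p_k\neq\bm 0$ at later steps, so the denominators in your decrease formula are positive. You left that point implicit.

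Your argument also gives a quantitative statement. It ties the fixed-point result to the energy-norm machinery behind the stop-gradient and error-proxy theorems. In fact the theorem follows in one line from Lemma~\ref{lem:pcg_contraction}: if $\bm u\neq\bm u^\star$, then $\|F_K(\bm u)-\bm u^\star\|_{\bm A}\le\rho^K\|\bm u-\bm u^\star\|_{\bm A}<\|\bm u-\bm u^\star\|_{\bm A}$ because $\rho<1$. Your step-by-step decrease is a weaker, Kantorovich-free version of that lemma. The paper's proof, by contrast, is purely algebraic and uses no norm or optimality property.

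Your direction labels are the correct ones; the paper's written proof swaps $\Rightarrow$ and $\Leftarrow$.
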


\subsection{Analysis of the Stop-gradient Update}
Our objective uses PCG as a target generator and blocks gradients through $F_K$.
To clarify why this matters, consider optimizing $\bm{u}$ directly using the iterative physics residual $\bm{s}_K(\bm{u})=\bm{u}-F_K(\bm{u})$.
The stop-gradient training corresponds to the semi-gradient update
$\bm{u}_{t+1}=\bm{u}_t-\eta\,\bm{s}_K(\bm{u}_t)$, which equals a relaxed fixed-point iteration. 

\begin{theorem}
Assume A1--A2 and $K\ge 1$. Consider the iteration
\[
\bm u_{t+1}=\bm u_t-\eta\,\bm s_K(\bm u_t)
=(1-\eta)\bm u_t+\eta F_K(\bm u_t),
\qquad \eta\in(0,1].
\]
Let $\bm e_t=\bm u_t-\bm u^\star$. Then
\[
\|\bm e_{t+1}\|_{\bm A}\le \bigl(1-\eta(1-\rho^K)\bigr)\,\|\bm e_t\|_{\bm A}.
\]
\end{theorem}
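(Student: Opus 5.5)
The plan is to combine the classical energy-norm convergence bound for PCG with the convexity of the energy norm. There are three steps.

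\textbf{Step 1: a one-step contraction for $F_K$.} The first ingredient is the standard Chebyshev-type bound for $K$ steps of PCG started from an arbitrary initial guess $\bm u$:
\[
\|F_K(\bm u)-\bm u^\star\|_{\bm A}\le 2\left(\frac{\sqrt\kappa-1}{\sqrt\kappa+1}\right)^{K}\|\bm u-\bm u^\star\|_{\bm A}.
\]
The stated $\rho=(\kappa-1)/(\kappa+1)$ has no factor $2$, so I would instead use the weaker but factor-free bound
\[
\|F_K(\bm u)-\bm u^\star\|_{\bm A}\le \rho^K\|\bm u-\bm u^\star\|_{\bm A}.
\]
This follows from the optimality of PCG. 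The $K$-th iterate minimizes the $\bm A$-norm of the error over $\bm u+\mathcal K_K(\bm M^{-1}\bm A,\bm M^{-1}\bm r_0)$, where $\bm r_0=\bm b-\bm A\bm u$. That Krylov space contains the iterates of $K$ steps of preconditioned Richardson iteration with optimal step $\omega=2/(\lambda_{\max}(\bm C)+\lambda_{\min}(\bm C))$, started from the same $\bm u$.

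The Richardson error propagator is $\bm I-\omega\bm M^{-1}\bm A$. Under the change of variables $\bm x\mapsto \bm A^{1/2}\bm x$ it is similar to a symmetric matrix with the same spectrum as $\bm I-\omega\bm C$, whose spectral radius is $\rho$. Hence each Richardson step contracts the $\bm A$-norm error by at most $\rho$, and $K$ steps by at most $\rho^K$. PCG can only do better. Equivalently, PCG minimizes $\max_{\lambda\in\sigma(\bm C)}|p(\lambda)|$ over polynomials with $p(0)=1$ and $\deg p\le K$, and the choice $p(\lambda)=(1-\omega\lambda)^K$ gives the bound. I would also note the degenerate case where PCG terminates early because the residual vanishes. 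Then $F_K(\bm u)=\bm u^\star$ and the bound holds trivially.

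\textbf{Step 2: combine with the relaxation.} Since $F_K(\bm u^\star)=\bm u^\star$ (Theorem 1), write
\[
\bm e_{t+1}=(1-\eta)\bm e_t+\eta\,(F_K(\bm u_t)-\bm u^\star).
\]
The triangle inequality in $\|\cdot\|_{\bm A}$, with $1-\eta\ge 0$ because $\eta\in(0,1]$, gives
\[
\|\bm e_{t+1}\|_{\bm A}\le(1-\eta)\|\bm e_t\|_{\bm A}+\eta\rho^K\|\bm e_t\|_{\bm A}=\bigl(1-\eta(1-\rho^K)\bigr)\|\bm e_t\|_{\bm A}.
\]
This is exactly the claim.

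\textbf{Main obstacle.} The expected difficulty is Step 1. $F_K$ is nonlinear in $\bm u$, because the CG step sizes depend on the residual, so $F_K$ has no fixed error-propagation matrix. The argument therefore has to go through the per-start optimality property of PCG rather than a linear-operator norm. The key points to verify are that the Krylov space generated from $\bm u$ contains the Richardson iterates, and that the $\bm A$-norm is the correct norm in which the Richardson propagator is a symmetric contraction with constant $\rho$.
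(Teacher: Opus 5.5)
Your proof is correct, and your Step 2 is exactly the paper's argument: the relaxation identity for $\bm e_{t+1}$, followed by the triangle inequality in $\|\cdot\|_{\bm A}$ using $1-\eta\ge 0$.

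The difference is in Step 1, the contraction $\|F_K(\bm u)-\bm u^\star\|_{\bm A}\le\rho^K\|\bm u-\bm u^\star\|_{\bm A}$. The paper passes to $\bm y=\bm M^{1/2}\bm u$ and views PCG as plain CG on $\bm C\bm y=\bm M^{-1/2}\bm b$. It then compares CG with $K$ steps of steepest descent with exact line search. That comparison method has an iterate-dependent step size, so its per-step factor $\rho$ must come from the Kantorovich inequality, which the paper proves as a separate lemma. You instead stay in the $\bm u$ variable and compare with fixed-step preconditioned Richardson at $\omega=2/(\lambda_{\max}(\bm C)+\lambda_{\min}(\bm C))$. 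Its error propagator is a fixed matrix, $\bm A^{1/2}$-similar to the symmetric matrix $\bm I-\omega\bm A^{1/2}\bm M^{-1}\bm A^{1/2}$ with spectrum $\{1-\omega\lambda:\lambda\in\sigma(\bm C)\}$. So $\rho$ is just a spectral radius, and no Kantorovich step is needed.

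Both routes rest on the same key fact. PCG is $\bm A$-norm optimal over $\bm u+\mathcal K_K(\bm M^{-1}\bm A,\bm M^{-1}\bm r_0)$, and the comparison iterates lie in that affine space. You verify the latter correctly via $p(\lambda)=(1-\omega\lambda)^K$ with $p(0)=1$. Your version is shorter and sits directly in the polynomial framework. That makes it immediate how to sharpen $\rho^K$ to the Chebyshev bound $1/T_K\bigl((\kappa+1)/(\kappa-1)\bigr)$ for $\kappa>1$, with $T_K$ the degree-$K$ Chebyshev polynomial. The paper's version has the minor virtue that its comparison method is itself parameter-free.

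One wording fix: PCG does not minimize $\max_{\lambda\in\sigma(\bm C)}|p(\lambda)|$. It minimizes $\|p(\bm M^{-1}\bm A)\bm e_0\|_{\bm A}$ over polynomials with $p(0)=1$ and $\deg p\le K$. That quantity is bounded by $\max_{\lambda\in\sigma(\bm C)}|p(\lambda)|\,\|\bm e_0\|_{\bm A}$, which is all your argument actually uses.
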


\begin{remark}
The theorem establishes a global contraction guarantee for the stop-gradient dynamics. Furthermore, if a solver map $F_K$ is differentiable at $\bm{u}^\star$ with Jacobian
$\bm{J}=\nabla F_K(\bm{u}^\star)$, then for the stop-gradient update we have
\[
\bm{e}_{t+1}=\bigl((1-\eta)\bm{I}+\eta\bm{J}\bigr)\bm{e}_t + O(\|\bm{e}_t\|_2^2),
\]
and for the full-gradient update on $\phi(\bm{u})=\tfrac12\|\bm{s}_K(\bm{u})\|_2^2$,
\[
\bm{e}_{t+1}=\Bigl(\bm{I}-\eta(\bm{I}-\bm{J})^\top(\bm{I}-\bm{J})\Bigr)\bm{e}_t
+ O(\|\bm{e}_t\|_2^2).
\]
Therefore, the full-gradient direction multiplies the residual by an extra factor
$(\bm I-\bm J)^\top$, yielding an effective curvature $(\bm I-\bm J)^\top(\bm I-\bm J)$.
When $F_K$ is close to identity near $\bm u^\star$ (so that $\bm J\approx \bm I$),
this can make the full-gradient step much weaker and more sensitive to the step size.
\end{remark}

\subsection{A Well-conditioned Error Proxy}
The following theorem provides a uniform equivalence between $\|\bm{u}-F_K(\bm{u})\|_{\bm A}$ and the energy error $\|\bm u-\bm u^\star\|_{\bm A}$, and contrasts it with the scaling behavior of the PDE residual $\bm r(\bm u)=\bm A\bm u-\bm b$.

\begin{theorem}
Assume A1--A2 and $K\ge 1$. Then for any $\bm{u}$,
\begin{equation}
    (1-\rho^K)\,\|\bm{u}-\bm{u}^\star\|_{\bm{A}}
\le \|\bm{s}_K(\bm{u})\|_{\bm{A}}
\le (1+\rho^K)\,\|\bm{u}-\bm{u}^\star\|_{\bm{A}}.
\end{equation}
Furthermore, let $\bm r(\bm u):=\bm A\bm u-\bm b$. Then
\begin{equation}
\lambda_{\min}(\bm A)\,\|\bm{u}-\bm{u}^\star\|_{\bm A}
\le \|\bm r(\bm u)\|_{\bm A}
\le \lambda_{\max}(\bm A)\,\|\bm{u}-\bm{u}^\star\|_{\bm A}.    
\end{equation}
\end{theorem}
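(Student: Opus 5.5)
The plan is to reduce both bounds to the standard energy-norm convergence estimate for PCG and then apply the triangle inequality. Write $\bm e=\bm u-\bm u^\star$ and $\bm e_K := F_K(\bm u)-\bm u^\star$ for the error after $K$ PCG steps started at $\bm u$. Then
\[
\bm s_K(\bm u)=\bm u-F_K(\bm u)=\bm e-\bm e_K.
\]

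The first ingredient is the classical PCG estimate: under A1--A2, for any initial guess,
\[
\|\bm e_K\|_{\bm A}\le 2\Bigl(\tfrac{\sqrt\kappa-1}{\sqrt\kappa+1}\Bigr)^K\|\bm e\|_{\bm A}.
\]
This holds because PCG minimizes the $\bm A$-norm error over $\bm u+\mathcal K_K(\bm M^{-1}\bm A,\bm M^{-1}\bm r_0)$. Equivalently, $\bm e_K=p(\bm M^{-1}\bm A)\bm e$ for the optimal polynomial $p$ of degree $\le K$ with $p(0)=1$. The estimate above uses Chebyshev polynomials.

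For the constant as stated in the theorem, I would instead use the cruder polynomial $p(\lambda)=(1-\omega\lambda)^K$ with $\omega=2/(\lambda_{\min}(\bm C)+\lambda_{\max}(\bm C))$, i.e.\ $K$ steps of optimally damped preconditioned Richardson. Its sup over $[\lambda_{\min}(\bm C),\lambda_{\max}(\bm C)]$ is exactly $\rho^K$ with $\rho=(\kappa-1)/(\kappa+1)$, and it carries no factor $2$. Since $\bm M^{-1}\bm A$ is $\bm A$-self-adjoint with the same spectrum as $\bm C$, optimality of PCG gives
\[
\|\bm e_K\|_{\bm A}\le \max_{\lambda\in\sigma(\bm C)}|p(\lambda)|\,\|\bm e\|_{\bm A}\le\rho^K\|\bm e\|_{\bm A}.
\]
This is presumably the same contraction used for the stop-gradient contraction theorem, so I would simply reuse it. If $\bm e=0$, or PCG terminates early, everything is trivially consistent.

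With this contraction in hand, the reverse triangle inequality gives
\[
\|\bm s_K\|_{\bm A}\ge \|\bm e\|_{\bm A}-\|\bm e_K\|_{\bm A}\ge(1-\rho^K)\|\bm e\|_{\bm A},
\]
and the forward triangle inequality gives the upper bound $(1+\rho^K)\|\bm e\|_{\bm A}$.

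For the residual part, note $\bm r(\bm u)=\bm A\bm e$, so
\[
\|\bm r\|_{\bm A}^2=\bm e^\top\bm A^3\bm e .
\]
Diagonalize $\bm A=\bm Q\Lambda\bm Q^\top$ and set $\bm z=\bm Q^\top\bm e$. Then $\sum\lambda_i^3z_i^2$ lies between $\lambda_{\min}^2\sum\lambda_iz_i^2$ and $\lambda_{\max}^2\sum\lambda_iz_i^2$. Taking square roots yields the two-sided bound. The contrast is that this ratio spans $\kappa(\bm A)$, whereas the self-consistency ratio stays within $[1-\rho^K,1+\rho^K]$.

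The main obstacle is justifying the constant $\rho^K$ rather than the usual $2\bigl((\sqrt\kappa-1)/(\sqrt\kappa+1)\bigr)^K$. The key is to invoke PCG's $\bm A$-norm optimality over all admissible residual polynomials rather than quoting the Chebyshev bound, and to check carefully that $\bm M^{-1}\bm A$ is similar to $\bm C$ and self-adjoint in the $\bm A$-inner product, so the spectral sup bound applies in the $\bm A$-norm.
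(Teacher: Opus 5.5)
Your proposal is correct. Its outer argument matches the paper's proof exactly: write $\bm s_K(\bm u)=\bm e-\bm e_K$, apply the forward and reverse triangle inequalities with $\|\bm e_K\|_{\bm A}\le\rho^K\|\bm e\|_{\bm A}$, and bound $\bm e^\top\bm A^3\bm e$ in an eigenbasis of $\bm A$. The difference is how you obtain the contraction with constant $\rho^K$ rather than the Chebyshev constant.

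\textbf{The paper's route.} It passes to the system $\bm C\bm y=\bm M^{-1/2}\bm b$ (Lemma~\ref{lem:norm_equiv}). It then uses that $K$ CG steps are no worse in the $\bm C$-norm than $K$ steps of steepest descent with exact line search. Finally, it bounds each steepest-descent step by $\rho$ using the Kantorovich inequality (Lemmas~\ref{lem:kantorovich} and~\ref{lem:sd_contraction}).

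\textbf{Your route.} You compare CG against the fixed residual polynomial $(1-\omega\lambda)^K$ of optimally damped Richardson. You then bound its $\bm A$-norm operator norm by its supremum over the spectrum, using that $\bm M^{-1}\bm A$ is $\bm A$-self-adjoint and similar to $\bm C$.

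\textbf{What each buys.} Your route is shorter and avoids Kantorovich altogether. A fixed polynomial in the operator is controlled directly by the spectrum, whereas steepest descent is a data-dependent, nonlinear iteration that must be bounded step by step. Your framing also shows that any admissible polynomial can be used. For instance, CG optimality gives the bound $\min\bigl(\rho^K,\,2((\sqrt{\kappa}-1)/(\sqrt{\kappa}+1))^K\bigr)$ at no extra cost.

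The paper's route needs neither the polynomial representation of the CG error nor the $\bm A$-self-adjointness check. It only uses that the steepest-descent iterates lie in the same affine Krylov space over which CG is energy-optimal, a fact the paper invokes without spelling it out.

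Both routes give exactly the same constant $\rho^K$. The worst-case one-step factor of exact-line-search steepest descent equals that of optimally damped Richardson.
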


\begin{remark} The theorem implies that $\|\bm s_K(\bm u)\|_{\bm A}$ tracks the true energy error $\|\bm u-\bm u^\star\|_{\bm A}$ up to a factor controlled by $\rho^K$; hence it provides a well-scaled solution-space correction (especially when $\rho^K$ is small). By contrast, the residual $\bm A\bm u-\bm b$ may be heavily rescaled by the spectrum of $\bm A$, and the squared-residual objective further suffers squared-conditioning since its curvature is $\bm A^2$.
\end{remark}

\section{Experiments}
In this section, we present a comprehensive experimental evaluation of \textsc{NPSolver} to assess its performance under challenging domain geometries and BCs, including
\begin{itemize}[leftmargin=12pt]
    \item \textit{2D generalization}: Tests on irregular domains to evaluate generalization across domain geometry, BCs, and forcing fields.
    \item \textit{3D scalability}: Performance assessment on 3D domains with varying internal topologies and mixed BCs.
    \item \textit{Downstream application}: Demonstration of \textsc{NPSolver} as a differentiable surrogate for gradient-based thermal control.
    \item \textit{Efficiency and ablation}: Quantitative analysis of computational costs and various ablation studies.
\end{itemize}

\sloppy \textsc{NPSolver} is trained with the iterative physics supervision and optimized with Adam and a OneCycle learning-rate schedule. We compare \textsc{NPSolver} against representative baselines spanning physics-informed and data-driven neural surrogates, including \textsc{PI-DeepONet}~\cite{wang2021pideeponet}, \textsc{PINO}~\cite{pino}, \textsc{PINN}~\cite{raissi_pinn}, \textsc{Transolver}~\cite{wu2024Transolver}, \textsc{Transolver++}~\cite{luo2025transolverplus}, \textsc{MGN}~\cite{pfaff_mgn}, \textsc{GPS}~\cite{rampasek2022GPS}, \textsc{PointNet++}~\cite{pointnet2}, and {\textsc{BENO}~\cite{wang2024beno}}. Ground-truth solutions are generated using an FVM coupled with a PCG solver. We report the relative $L_2$ error and inference time to assess model accuracy and computational cost.

\subsection{2D Generalization under Different BCs}
\label{sec:2d_exp}
In this study, we evaluate \textsc{NPSolver} on 2D Poisson problems to assess generalization under domain geometries, BCs, and forcing fields. All models are trained on a fixed geometry distribution and tested on different domain geometries.

\textbf{Geometry.} Following BENO~\cite{wang2024beno}, we construct a corner-removed square family with five categories, denoted as {C$k$}, where $k \in \{0, \dots, 4\}$ is the number of removed rectangular corners (C0 corresponds to the intact square). For each domain geometry, the width and height of the removed corner rectangles are sampled independently from a uniform distribution (detailed in Appendix~\ref{app_sec:2d}). To evaluate both in-distribution and out-of-distribution
generalization ability, we \emph{train} the model only on the most irregular category C4 and \emph{test} it on all categories C0-C4.

\textbf{BC regimes.} We consider three boundary regimes, which define three cases that share the same geometry family and differ only in boundary specification: (i) \textbf{All-Dirichlet}, Dirichlet BCs with zero value on all boundary patches; (ii) \textbf{All-Neumann}, zero-flux Neumann BCs on all boundary patches; and (iii) \textbf{RandomBC}, patch-wise mixed BCs where each boundary patch is independently and randomly assigned Dirichlet or Neumann type (each patch corresponds to a geometric segment, as illustrated in Fig.~\ref{fig:fvm_cell_face}). Empirically, these cases exhibit increasing learning difficulty.

In each case, training samples $(\Omega, B, f)$ are generated on-the-fly by sampling a domain $\Omega$ from the above geometry set, assigning BCs $B$ per the case definition, and drawing a forcing field $f$ from the distribution (Appendix~\ref{app_sec:2d}). For evaluation, we generate 100 test samples per category, totaling 500 samples per case.

\begin{table}[!t]
    \centering
    \caption{Relative $L_2$ error (\%) and inference time (ms) across the various geometries (C4 to C0) in All-Dirichlet BCs.}
    \label{tab:dirichlet_err}
    \begin{tabular}{ccccccc}
         \toprule
         Method & C4 & C3 & C2 & C1 & C0 & Time \\
         \midrule
         \textsc{PI-DeepONet} & >100 & >100 & >100 & >100 & >100 & 56.5 \\
         \textsc{PINO} & 19.40 & 23.67 & 23.88 & 25.46 & 26.53 & \textbf{5.5} \\
         \textsc{PINN} & \underline{5.74} & \underline{3.07} & \underline{3.28} & \textbf{1.13} & \textbf{0.77} & 251880 \\
         \midrule
         \textsc{NPSolver} & \textbf{1.58} & \textbf{2.20} & \textbf{2.88} & \underline{3.34} & \underline{3.59} & \underline{9.1} \\
         \bottomrule
    \end{tabular}
        \vspace{-6pt}
\end{table}

\begin{table}[!t]
    \centering
    \caption{Relative $L_2$ error (\%) and inference time (ms) across the various geometries (C4 to C0) in All-Neumann BCs.}
    \label{tab:neumann_err}
    \begin{tabular}{ccccccc}
         \toprule
         Method & C4 & C3 & C2 & C1 & C0 & Time \\
         \midrule
         \textsc{PI-DeepONet} & >100 & >100 & >100 & >100 & >100 & 55.7 \\
         \textsc{PINO} & \underline{34.66} & 32.35 & 36.43 & 34.64 & 37.84 & \textbf{5.9} \\
         \textsc{PINN} &  50.52 & \underline{27.25} & \underline{11.74} & \underline{24.79} & \textbf{0.19} & 281860 \\
         \midrule
         \textsc{NPSolver} & \textbf{3.11} & \textbf{4.62} & \textbf{7.95} & \textbf{11.27} & \underline{13.32} & \underline{9.0} \\
         \bottomrule
    \end{tabular}
        \vspace{-6pt}
\end{table}

\begin{figure}[!t]
  \centering
  \includegraphics[width=\linewidth]{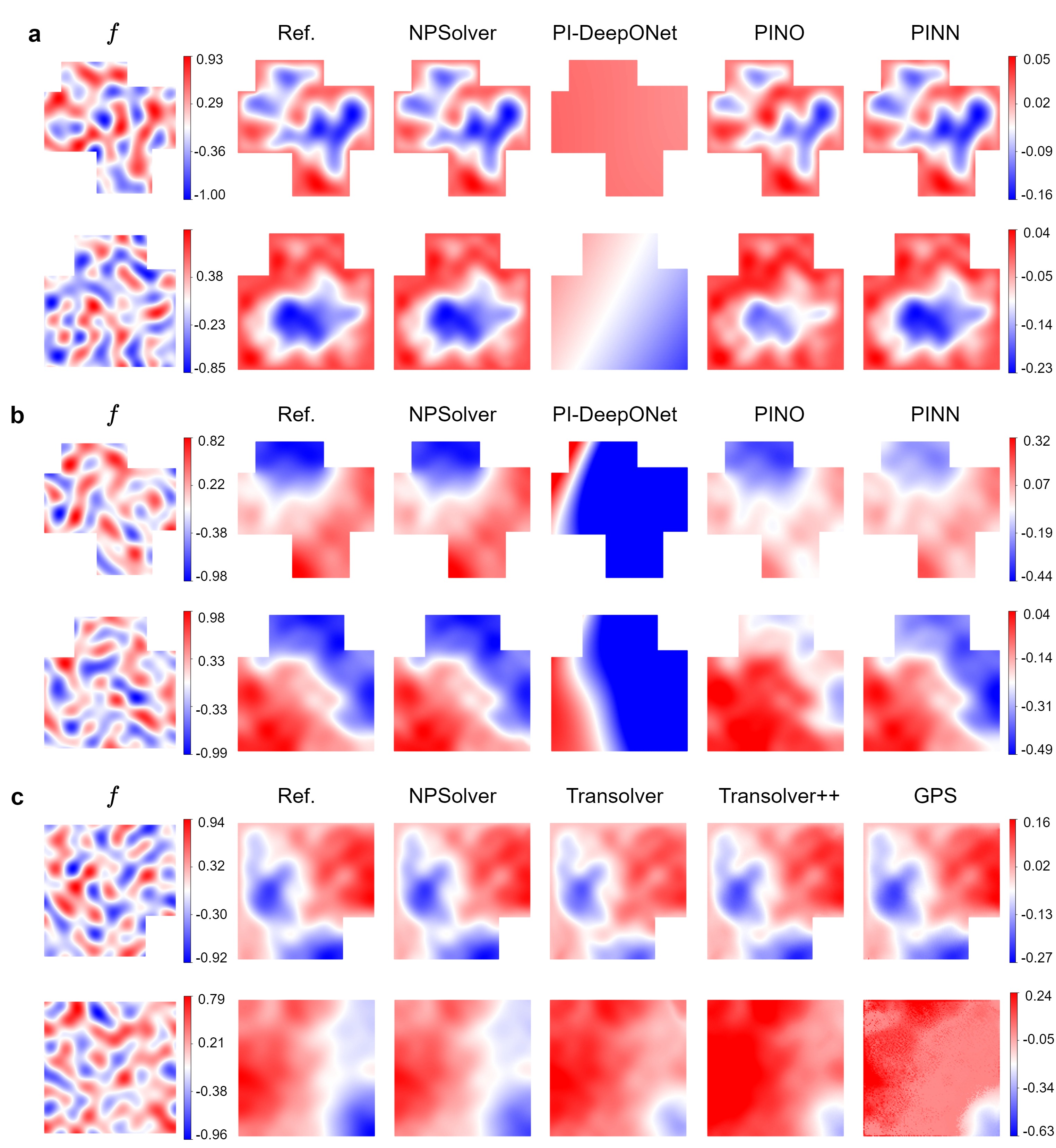}
  \caption{Visualization of models' predictions for representative samples. (a) All-Dirichlet case (C4 and C2). (b) All-Neumann case (C4 and C2). (c) RandomBC case (C1 and C0).}
  \label{fig:2d_vis}
      \vspace{-6pt}
\end{figure}

\begin{figure*}[!t]
    \centering
    \includegraphics[width=\textwidth]{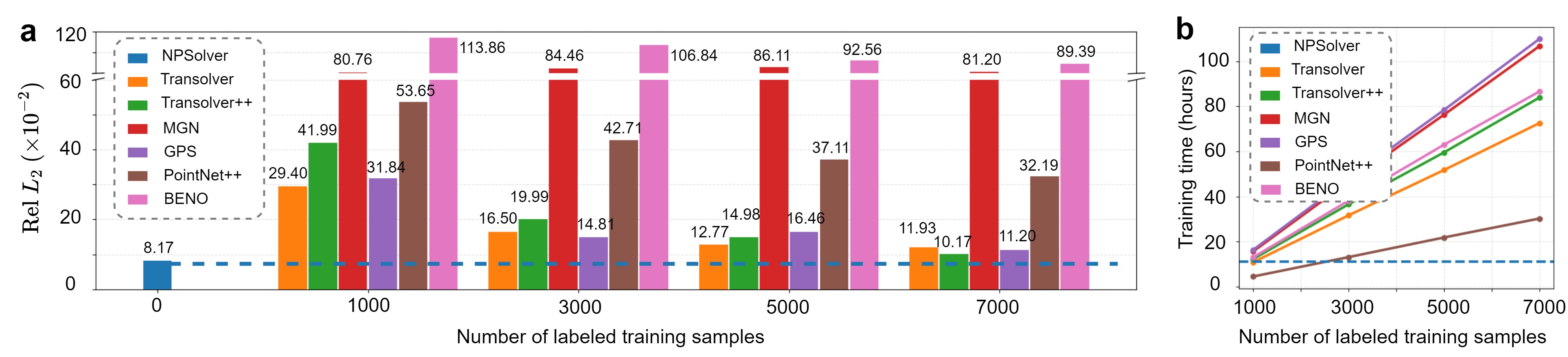}
    \caption{Data efficiency and scaling analysis in the RandomBC case. (a) Mean relative $L_2$ error as a function of the number of labeled training samples {for supervised data-driven baselines trained on the C4 geometry category and \textsc{NPSolver} is trained without solution labels}. (b) Comparison of training time {under the same labeled-data budgets.}}
    \label{fig:data_driven}
    \vspace{-6pt}
\end{figure*}

\begin{figure}[!t]
    \centering
    \includegraphics[width=\linewidth]{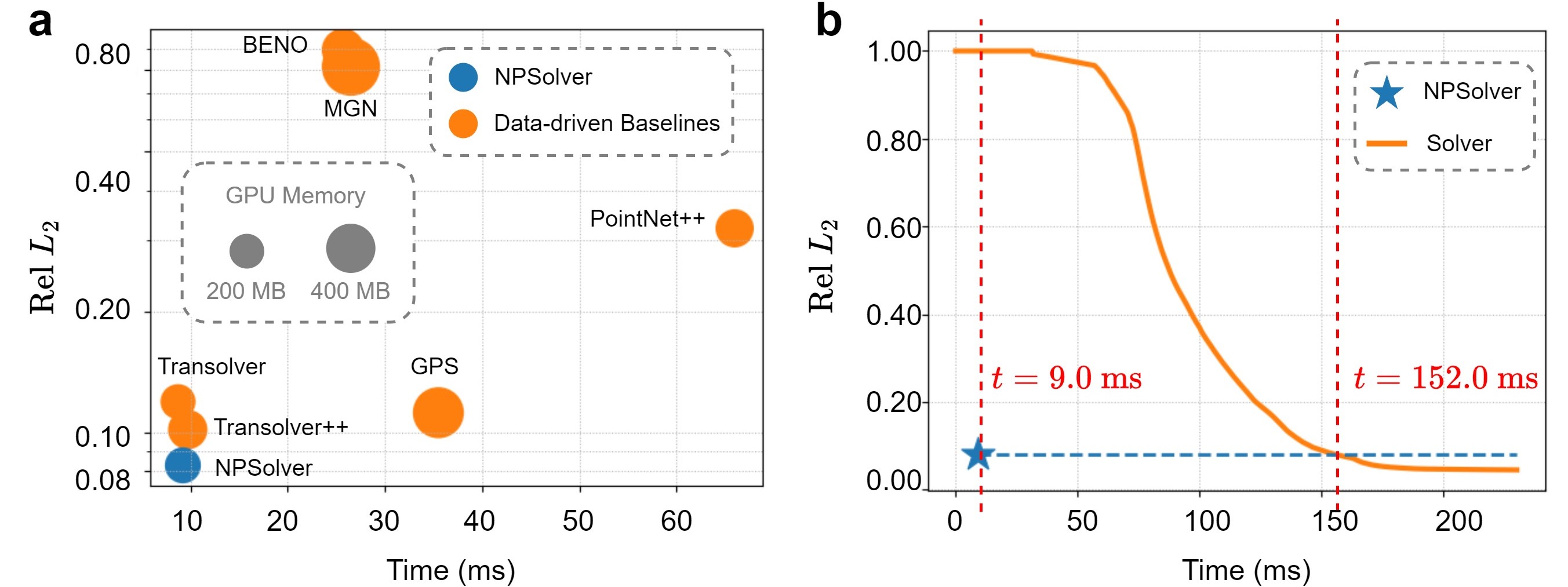}
    \caption{Computational Efficiency {in the RandomBC case}. (a) {Multi-objective comparison of predictive accuracy, inference time, and GPU memory usage.} (b) {Accuracy–time comparison between \textsc{NPSolver} and the iterative numerical solver.}}
    \vspace{-6pt}
    \label{fig:cost}
\end{figure}

\textbf{All-Dirichlet / Neumann results.} In the All-Dirichlet and All-Neumann cases, we compare \textsc{NPSolver} against three representative self-supervised baselines, including \textsc{PI-DeepONet}, \textsc{PINO} and \textsc{PINN}. Relative $L_2$ errors over the five geometry categories and average inference times are summarized in Table~\ref{tab:dirichlet_err} and Table~\ref{tab:neumann_err}. Visualizations of models' predictions are provided in Fig.~\ref{fig:2d_vis}.

Notably, \textsc{PI-DeepONet} implicitly assumes a fixed computational domain and struggles with joint generalization over both geometry and forcing. \textsc{PINO}, relying on an FNO backbone defined on regular grids, suffers from severe performance degradation under irregular geometry. \textsc{PINN} needs to retrain the network per sample, causing high optimization cost (Appendix~\ref{app_sec:exp} for further details).

Across both BC regimes, \textsc{NPSolver} consistently outperforms \textsc{PI-DeepONet} and \textsc{PINO} by a wide margin. In the All-Dirichlet case, our accuracy is comparable to that of the per-instance \textsc{PINN}, while providing orders-of-magnitude faster inference at test time. In the more challenging All-Neumann case, \textsc{NPSolver} retains strong performance across geometry shifts and outperforms \textsc{PINN} on most categories. Overall, these results demonstrate that \textsc{NPSolver} delivers a favorable accuracy–efficiency trade-off, achieving robust generalization across domain geometry under both Dirichlet and Neumann BCs, while maintaining fast inference.

\textbf{RandomBC results.} In the RandomBC case, the model needs to simultaneously generalize across domain geometries, forcing fields, and patch-wise varying BCs. Given the limitations of the three self-supervised baselines discussed above (e.g., fixed-domain assumptions, regular-grid constraints, and per-instance retraining), we instead compare \textsc{NPSolver} against representative data-driven baselines, including \textsc{Transolver}, \textsc{Transolver++}, \textsc{MGN}, \textsc{GPS}, \textsc{PointNet++}, and {\textsc{BENO}}. For these baselines, we construct supervised datasets of varying sizes (e.g., 1k/3k/5k/7k labeled samples) for the C4 geometry category with the numerical solver. In contrast, \textsc{NPSolver} is trained without solution labels via iterative physics supervision (Appendix~\ref{app_sec:exp} for further details).

Fig.~\ref{fig:data_driven} illustrates the test error and training time as functions of training dataset size, with qualitative visualizations in Fig.~\ref{fig:2d_vis}(c). {The reported training time for supervised baselines excludes the offline numerical label-generation cost.} Remarkably, \textsc{NPSolver} achieves a relative $L_2$ error of 8.17\% without any labeled data, outperforming the strongest data-driven baseline (10.17\%) even when the latter is trained with 7k labeled samples. Furthermore, \textsc{NPSolver} requires significantly less training time than data-driven baselines. This highlights the efficiency of iterative physics supervision: it provides a stronger inductive bias and markedly better sample efficiency than purely data-driven regression, while avoiding the cost of large-scale label generation (More visualizations in Appendix Fig.~\ref{app_fig:2d}).

\textbf{Computational cost.} In the RandomBC case, we assess the practical utility of \textsc{NPSolver} via a multi-objective analysis of predictive accuracy, inference time, and GPU memory usage. All measurements are conducted on identical GPU hardware to ensure a fair comparison. As shown in Fig.~\ref{fig:cost}(a), \textsc{NPSolver} occupies the optimal lower-left region, simultaneously achieving the lowest error and high-speed inference with minimal memory overhead. To quantify the acceleration, we compare \textsc{NPSolver} against the iterative numerical solver. As illustrated in Fig.~\ref{fig:cost}(b), \textsc{NPSolver} generates high-fidelity predictions in a single 9.0 ms forward pass, reaching an error level that the numerical solver attains after 152.0 ms. This represents an over 15$\times$ speedup at matched accuracy. These results highlight \textsc{NPSolver}'s robust generalization and its potential to serve as a high-efficiency alternative to classical iterative methods.
\vspace{-2pt}

\begin{figure}
  \centering
  \includegraphics[width=\linewidth]{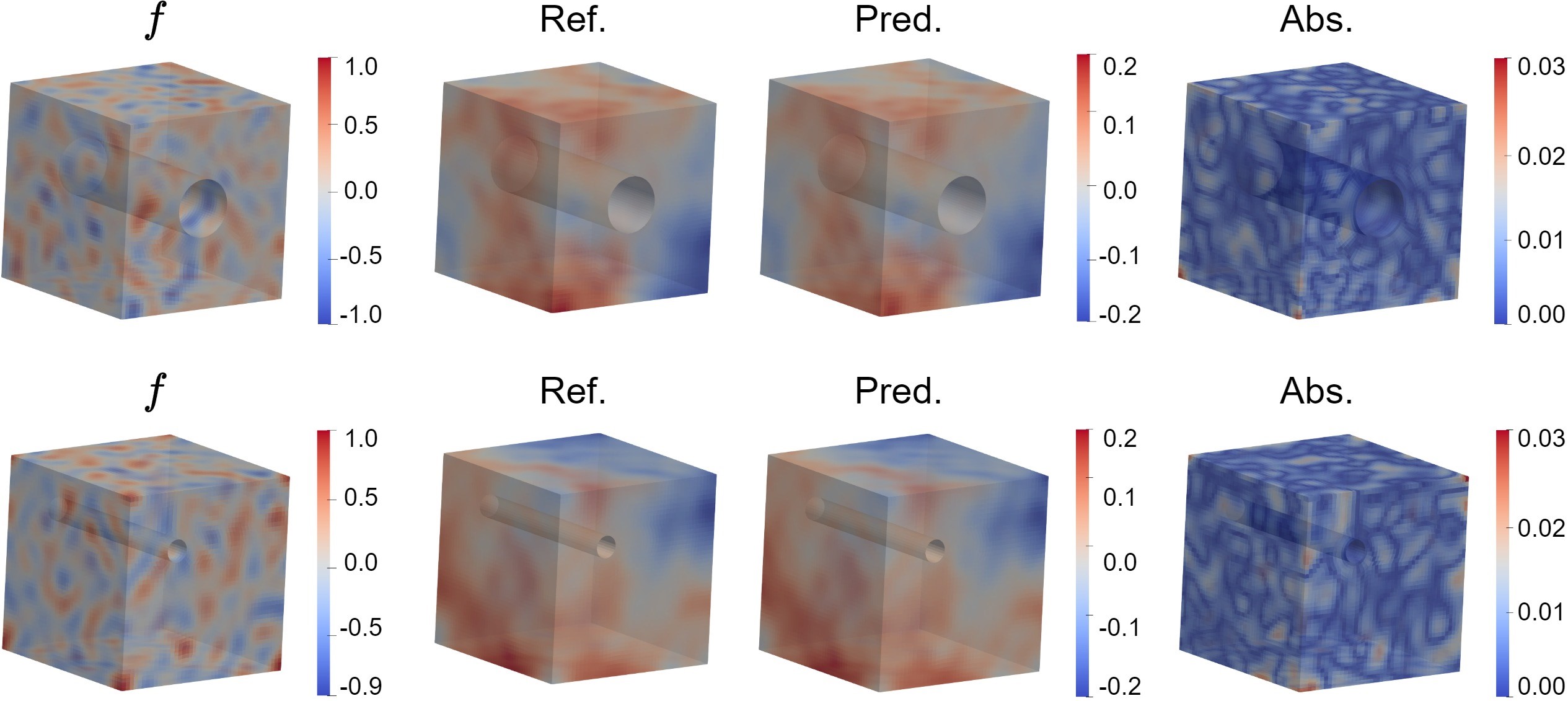}
  \caption{{Representative 3D predictions of \textsc{NPSolver} on the cube-with-cylindrical-hole Poisson problem with mixed BCs.}}
  \label{fig:3d_vis}
  \vspace{-6pt}
\end{figure}

\begin{figure}
  \centering
  \includegraphics[width=\linewidth]{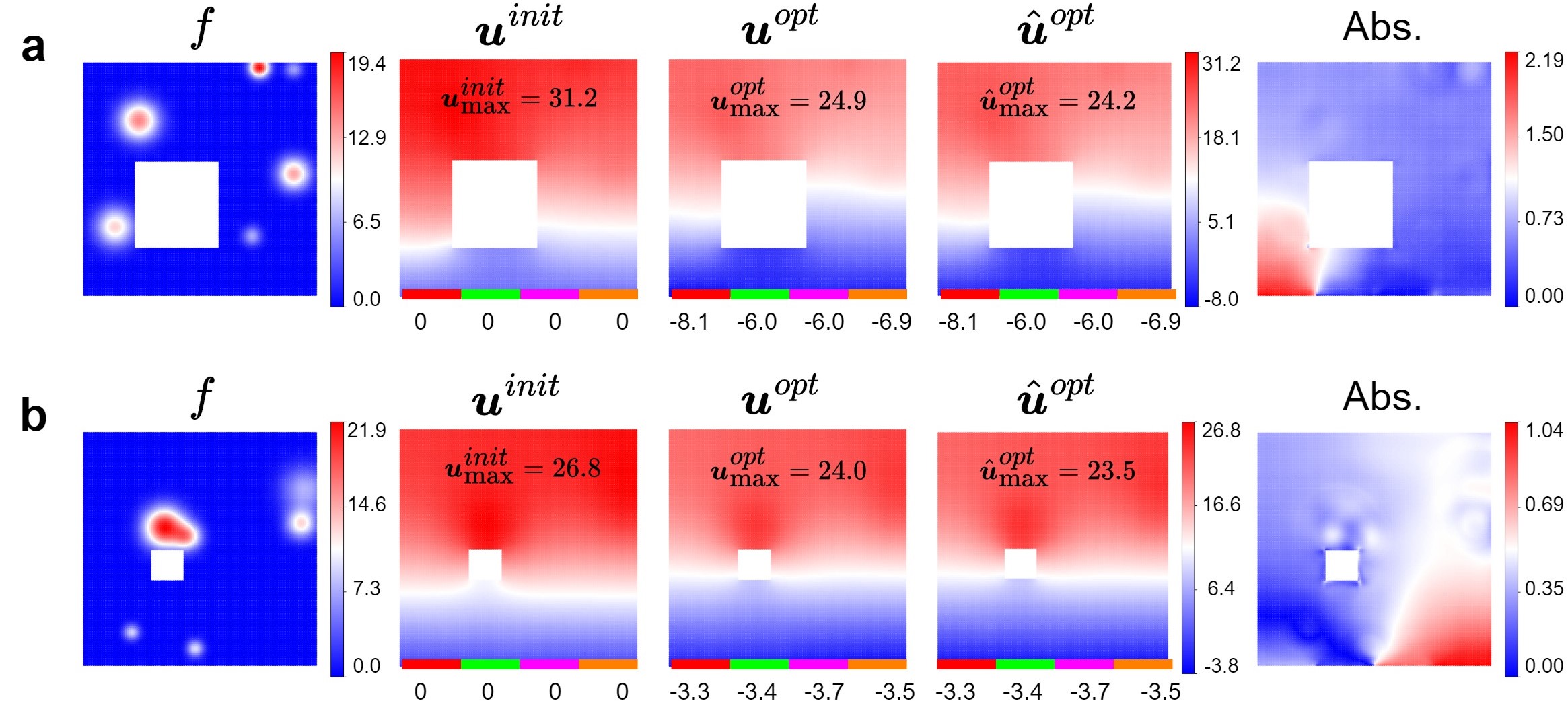}
  \caption{Visualization of the thermal control task of representative samples with forcing field $f$, the initial temperature $\bm u^{init} (\bm c= \bm 0)$,  the reference temperature $\bm u^{opt}$ and \textsc{NPSolver}'s prediction $\hat {\bm u}^{opt}$ under optimized boundary values $\bm c$.}
  \label{fig:control_inv_res}
  \vspace{-6pt}
\end{figure}

\subsection{3D Poisson on Cube-with-Cylindrical-Hole}
In this section, we evaluate the scalability of \textsc{NPSolver} on a 3D Poisson problem posed on irregular domains with mixed BCs. The task requires the model to generalize over both forcing fields and a parameterized family of geometries with internal cavities.

\textbf{Geometry and BCs.} We construct a family of 3D domain geometries by subtracting a cylindrical cavity from a cube. Both the diameter $d$ and the center position of the cylinder are randomly sampled (detailed in Appendix~\ref{app_sec:3d}). Regarding BCs, we impose zero-flux Neumann BCs on the outer cube surfaces, and zero-valued Dirichlet BCs on the internal cylindrical surface. This mixed-BC configuration couples an internal Dirichlet surface with an outer Neumann boundary, creating a challenging solution landscape.

\textbf{Results.} During training, \textsc{NPSolver} samples domain instances $\Omega$ from the geometry set described above and a forcing $f$ from the forcing distribution on the fly, and is trained with the iterative physics supervision. For evaluation, we generate 10 test domain instances, each paired with 10 independent forcing fields, yielding a total of 100 test samples with ground truth solutions provided by the numerical solver. On this 3D test set, \textsc{NPSolver} achieves a mean relative $L_2$ error of $8.67\%$ with an average inference time of 11.8 ms. {For comparison, a supervised \textsc{Transolver} baseline trained with 1k labeled samples achieves a relative $L_2$ error of 27.22\%}. Visualizations of model predictions are provided in Fig.~\ref{fig:3d_vis}. Overall, these results demonstrate that \textsc{NPSolver} successfully scales to 3D irregular domains and maintains robust accuracy under mixed BCs, {while substantially outperforming a representative supervised baseline}.

\subsection{Thermal Control on Perforated Plate}
In this section, we apply \textsc{NPSolver} to a thermal boundary-control problem on a perforated electronic plate. The objective is to maintain a steady-state temperature below a safety threshold while reducing the cooling effort. We consider steady-state heat conduction on a square plate $\Omega \subset \mathbb{R}^2$ with an internal rectangular cavity. Let $u(x)$ denote the temperature relative to the ambient environment, governed by the equation $\nabla^2u(\bm x) = - f(\bm x),\ \bm x \in \Omega$, where $f(\bm x)$ represents volumetric heat generation from electronic components. 

\textbf{Geometry and BCs.} We construct a family of 2D domains by subtracting a single rectangular cavity from a square domain. For each domain geometry, the cavity's width, height, and center coordinates are randomly sampled (detailed in Appendix~\ref{app_sec:control}). Controllable cooling is applied through four independently controlled Dirichlet segments on the bottom boundary, with values parameterized by a vector $\bm c = (c_1, c_2, c_3, c_4)$. All other boundaries are assigned zero-flux Neumann BCs, modeling insulated surfaces. 

\textbf{Results on forward problems.} To address this task, \textsc{NPSolver} is trained to learn the mapping $(\Omega, \bm c, f) \rightarrow u$. During training, $\Omega$ is sampled from the geometry set described above, $\bm c_i$ is sampled uniformly from $[-5, 5]$, and $f$ is sampled from a specially designed distribution (Appendix~\ref{app_sec:control}) to simulate realistic heating. For evaluation, we generate 10 test domain instances, each paired with 10 independent forcing and randomized Dirichlet values $\bm c$, yielding a total of 100 test samples with ground-truth solutions provided by the numerical solver. On the test set, the model achieves a mean relative $L_2$ error of 0.58\% and a peak temperature $\bm u_\text{max}$ relative error of 0.44\%. The inference time is only 10.6 ms, providing the fast and accurate prediction necessary for active thermal control.

\textbf{Results on control tasks.} We initialize the four Dirichlet segments with $\bm c=\bm 0$, i.e., no active cooling. If the predicted peak temperature $\hat {\bm u}_{\max}$ is below the prescribed threshold $u_m$, no control is required. Otherwise, we fix the network parameters and optimize $\bm c$ via gradient-based updates to enforce the peak-temperature constraint while penalizing excessive cooling. Specifically, the control loss is defined as
\begin{equation}
L_{\mathrm{ctrl}}(\bm{c})
=
\mathrm{ReLU}\!\left( \hat {\bm u}_{\max} - {u}_{m}\right)
\;+\;
\alpha \cdot \frac{1}{4}\sum_{i=1}^{4} c_i^{2},
\end{equation}
where $\alpha$ balances constraint satisfaction and cooling effort. 

We evaluate the control performance on the test set by optimizing the cooling vector $\bm c$ over 100 iterations with a safety threshold $u_m=25$. {And we do not explicitly constrain the control range during optimization.} The optimization achieved an 83\% success rate in satisfying the safety constraint. On average, the peak temperature is reduced from an initial $\bm u^{init}_{\max} = 33.42$ (with $\bm c = 0$) to $\bm u^{opt}_{\max} = 25.31$. This significant thermal reduction is achieved with a cooling cost $\frac{1}{4}\sum^4_{i=1}\vert c_i \vert$ of $4.47$ and an average optimization time of 5.91 s per instance, demonstrating that \textsc{NPSolver} serves as an efficient differentiable surrogate for gradient-based boundary control. {We further analyze the 17\% failed cases and find that all failures stem from out-of-distribution control requirements: satisfying the temperature constraint requires boundary values outside the surrogate’s training range [-5,5], rather than surrogate prediction error or optimization failure. This suggests that a broader training range of $\bm c$ may further improve robustness.}

As shown in Fig.~\ref{fig:control_inv_res}, we visualize the heat-source field $f$, the initial reference temperatures $\bm u^{init}$, and both the reference $\bm u^{opt}$ and \textsc{NPSolver}'s prediction $\hat {\bm u}^{opt}$ under optimized boundary values $\bm c$. Fig.~\ref{fig:control_curve} illustrates the control loss convergence for a representative test sample. \textsc{NPSolver} reaches a loss level comparable to the numerical solver in just 0.56 s, whereas the numerical solver requires 6.20 s, representing an over 10$\times$ speedup in the optimization process. These results suggest that \textsc{NPSolver} can effectively serve as a fast surrogate in such optimization tasks, offering a favorable trade-off between computational efficiency and predictive accuracy.

\begin{table}[t!]
    \centering
    \caption{Relative L2 error (\%) for combinations of two architectures and two training paradigms: iterative physics supervision (I.S.) and data supervision (D.S.).}
    \label{tab:ablation}
    \resizebox{8.35cm}{!}{
    \begin{tabular}{ccccccc}
    \toprule
        Method & C4 & C3 & C2& C1 & C0 & Avg.\\
        \midrule
        \textsc{BA-Transolver (D.S.)} & \textbf{3.03} & \textbf{6.00} & 11.20 & \underline{11.16} & 15.04 & \underline{9.29}\\
        \textsc{Transolver (D.S.)} & 5.49 & 8.17 & 12.08 & 15.36 & 18.56 & 11.93\\
        \textsc{Transolver (I.S.)} & 6.46 & 8.22 & \underline{9.44} & 11.27 & \underline{13.70} & 9.82\\
        \midrule
        \textsc{BA-Transolver (I.S.)} & \underline{4.91} & \underline{7.07} & \textbf{8.08} & \textbf{9.70} & \textbf{11.08} & \textbf{8.17}\\
        \bottomrule
    \end{tabular}}
    \vspace{-2pt}
\end{table}

\begin{table}[!t]
    \centering
    \caption{Relative $L_2$ error (\%) and training time (hours) for the residual supervision and iterative physics supervision with different PCG steps $K$.}
    \label{tab:ablation_k}
    \begin{tabular}{cccccc}
    \toprule
          & Residual & $K=1$ & $K=20$ & $K=40$ & $K=80$ \\
         \midrule
         Rel $L_2$ & 79.59 & 16.80 & \textbf{8.12} & \underline{8.17} & 8.90 \\
         Training time & 10.53 & 10.87 & 11.05 & 11.18 & 11.50 \\
         \bottomrule
    \end{tabular}
    \vspace{-2pt}
\end{table}
\subsection{Ablation Study}
\label{sec:ablation}
We conduct ablation studies on the 2D RandomBC case to evaluate the contributions of iterative physics supervision and \textsc{BA-Transolver}.

\textbf{Architecture and supervision paradigm.} We evaluate three new variants by crossing two architectures (\textsc{BA-Transolver} and \textsc{Transolver}) with two training paradigms: iterative physics supervision (I.S.) and data supervision (D.S.). Table~\ref{tab:ablation} reports per-category relative $L_2$ errors on RandomBC. \textsc{NPSolver}, i.e., \textsc{BA-Transolver} (I.S.), achieves the best performance. Under data supervision, \textsc{BA-Transolver} outperforms vanilla \textsc{Transolver}, indicating that explicitly separating boundary and interior during tokenization is beneficial for the complex boundary-value problems. Furthermore, under a fixed backbone, iterative physics supervision (I.S.) consistently outperforms data supervision (D.S.), suggesting that iterative supervision acts as a stronger inductive bias than pure regression in this boundary-sensitive regime, yielding better generalization.

\begin{table}[t]
    \centering
    \caption{{Per-optimizer-iteration cost breakdown for residual supervision and iterative physics supervision with different PCG steps $K$.}}
    \label{tab:cost_breakdown}
    \begin{tabular}{cccccc}
        \toprule
        & Residual & $K=1$ & $K=20$ & $K=40$ & $K=80$ \\
        \midrule
        Loss time (s) & 0.013 & 0.014 & 0.020 & 0.026 & 0.036  \\   
        Total time (s) & 0.376 & 0.377 & 0.383 & 0.389 & 0.399 \\
        Loss ratio (\%) & 3.46 & 3.71 & 5.22 & 6.68 & 9.02 \\
        \bottomrule
    \end{tabular}
\end{table}
\textbf{Impact of PCG steps in iterative supervision and residual supervision.} We investigate the influence of the number of PCG steps $K$ used to generate the iterative supervision target $\tilde{\bm u} = F_K(\hat{\bm u})$ and compare against direct residual supervision $\Vert \bm A \hat{\bm u} - \bm b \Vert$. As summarized in Table~\ref{tab:ablation_k}, residual supervision performs poorly in the complex RandomBC case. This is primarily due to the ill-conditioning of $\bm A$, which induces poorly scaled gradients that hinder convergence and destabilize the optimization process. In contrast, iterative supervision markedly improves accuracy by providing a
more stable and well-scaled training signal, while introducing only a small additional training cost. {A finer per-optimizer-iteration cost breakdown in Table~\ref{tab:cost_breakdown} shows that even at $K = 40$, the loss computation under iterative supervision accounts for only 6.68\% of each optimizer iteration, compared with 3.46\% under residual supervision, indicating that the dominant training cost still comes from the network forward/backward pass rather than the PCG steps.} Peak performance is achieved at an intermediate $K$. This trend reflects a practical trade-off: small $K$ yields a weak correction signal that may not sufficiently guide predictions toward a numerically consistent solution, whereas overly large $K$ can produce targets that are too aggressive relative to the current model output, increasing optimization difficulty and harming stability.

\textbf{Impact of stop-gradient.} We assess the necessity of stopping gradient through the $K$-step PCG operator $F_K(\cdot)$. Without stop-gradient, the relative $L_2$ error increases from 8.17\% to 11.90\%. This observation is consistent with Theorem~\ref{thm:stop_grad}, which suggests that backpropagating through the truncated PCG steps can lead to more sensitive updates, while applying stop-gradient yields a more stable learning signal.

\section{Conclusion}
We propose \textsc{NPSolver}, a neural Poisson solver trained \emph{without} solution labels via iterative physics supervision. Instead of relying on fully converged numerical solutions, \textsc{NPSolver} constructs self-supervision targets by applying a small, fixed number of PCG iterations to the network prediction. Our theory shows that the self-consistency residual induced by iterative supervision is a well-conditioned proxy for the solution error, and that the stop-gradient design leads to more favorable optimization dynamics. To improve generalization across domain geometries and BCs, we introduced \textsc{BA-Transolver}, a boundary-aware attention architecture that explicitly integrates boundary information when modeling global interactions. \textsc{NPSolver} is evaluated on a diverse suite of 2D and 3D experiments, demonstrating its ability to generalize across irregular geometries, BCs, and forcing fields, as well as its utility in a downstream thermal control application. Overall, these results show that \textsc{NPSolver} offers a favorable error–cost trade-off for complex boundary-value Poisson problems.

\section*{Limitations and Ethical Considerations}

In this paper, we focus on the Poisson equation as a representative PDE setting. More broadly, our method suggests an alternative approach to constructing physics-based training objectives beyond PDE residual losses. However, its generality is not fully established in this paper. Validating on broader PDE families and extending the current PCG/Jacobi instantiation to other Krylov solvers and stronger preconditioners remain important directions for future work. Our experiments are conducted on PDE datasets and do not involve human subjects or sensitive information.

\section*{GenAI Disclosure}
The authors utilized generative AI tools to polish the language and improve the text quality. All AI-generated suggestions were thoroughly reviewed and validated by the authors.


\begin{acks}
{The work is supported by the Beijing Natural Science Foundation (No. F261002) and the National Natural Science Foundation of China (No. 62276269 and No. 62506367). R.Z. would like to acknowledge the supported from the China Postdoctoral Science Foundation under Grant Number 2025M771582 and the Postdoctoral Fellowship Program of CPSF under Grant Number GZB20250408.}
\end{acks}

\bibliographystyle{ACM-Reference-Format}
\bibliography{reference}

\appendix
\setcounter{equation}{0}
\renewcommand{\theequation}{A.\arabic{equation}}
\renewcommand\theHequation{Appendix.\theequation}
\setcounter{figure}{0}
\renewcommand\theHfigure{Appendix.\thefigure}
\renewcommand{\thefigure}{A.\arabic{figure}}
\setcounter{table}{0}
\renewcommand{\thetable}{A.\arabic{table}}
\renewcommand\theHtable{Appendix.\thetable}

\begin{center}
\Large\bfseries Appendix
\end{center}

\section{FVM Discretization}
\label{app_sec:fvm}
We adopt a cell-centered finite-volume method (FVM) to discretize the Poisson equation on an irregular domain. As illustrated in Fig.~\ref{app_fig:fvm_mesh}, the computational domain is partitioned into a set of non-overlapping control volumes $\{ V_i \}^{N}_{i=1}$. The unknown $u$ and the forcing $f$ are stored at cell centroids $\{ \bm x_i \}$, while boundary conditions are imposed on boundary faces (grouped into boundary patches).

We consider the Poisson equation
\begin{equation}
    \nabla^2 u(\bm x) = f(\bm x), \bm x \in \Omega,
\end{equation}
with boundary conditions applied on $\partial \Omega$ (Dirichlet and/or Neumann).
Integrating the equation over each control volume $V_i$ and applying the divergence theorem yield
\begin{equation}
\label{app_eq:fvm_int}
    \int_{V_i} \nabla^2 u dV = \int_{V_i} f dV \Longrightarrow \int_{\partial V_i} \nabla u \cdot \bm n dS = \int_{V_i} f dV
\end{equation}
where $\partial V_i$ is the boundary of $V_i$ and $\bm n$ is the outward pointing unit vector normal to $\partial V_i$. Let $E(i)$ denote the set of faces of cell $i$, $\vert S_e \vert$ the area for each face $e \in E(i)$, and $\vert V_i \vert$ the volume for each control volume $V_i$. The integration in Eq.~\ref{app_eq:fvm_int} can be discretized as
\begin{equation}
\label{app_eq:fvm_flux_dis}
    \sum_{e \in E(i)} (\nabla u \cdot \bm n)_e \vert S_e \vert = f_i \vert V_i, \vert
\end{equation}
where $f_i$ denotes the cell-averaged forcing in $V_i$.

We approximate the normal gradient using neighboring cell values. On an orthogonal mesh, a standard approximation is
\begin{equation}
    (\nabla u \cdot \bm n)_e \approx \frac{u_{N(e)} - u_i}{d_e},
\end{equation}
where $u_i$ is the cell-center unknown in $V_i$, $u_{N(e)}$ is the neighbor-cell unknown across face $e$, and $d_e$ is the projected distance along the face normal. Therefore, Eq.~\ref{app_eq:fvm_flux_dis} can be rewritten as
\begin{equation}
    \sum_{e \in E(i)} \frac{u_{N(e)} - u_i}{d_e}  \vert S_e \vert = f_i \vert V_i \vert,
\end{equation}
Collecting all cells leads to a sparse linear system
\begin{equation}
    A \bm u = \bm b,
\end{equation}
where $A$ is the discrete Laplacian induced by the mesh and boundary treatment, and $b$ aggregates source and boundary contributions.

For Dirichlet boundary $u(\bm x) = g_D(\bm x)$ on a boundary face, the boundary value is incorporated into $\bm b$ (and/or modifies diagonal entries) through the face flux discretization. For Neumann boundary $\frac{\partial u}{\partial \bm n} = g_N(\bm x)$ on a boundary face, the prescribed flux contributes directly to the face integral. For the pure Neumann case, the system has a one-dimensional null space (solutions are defined up to an additive constant). We remove this ambiguity by imposing a reference constraint: selecting a reference location $\bm {x}_{\mathrm{ref}} \in \Omega$ and enforcing $u(\bm {x}_{\mathrm{ref}}) = 0$, which makes the linear system well-posed.

\begin{figure}
    \centering
    \includegraphics[width=\linewidth]{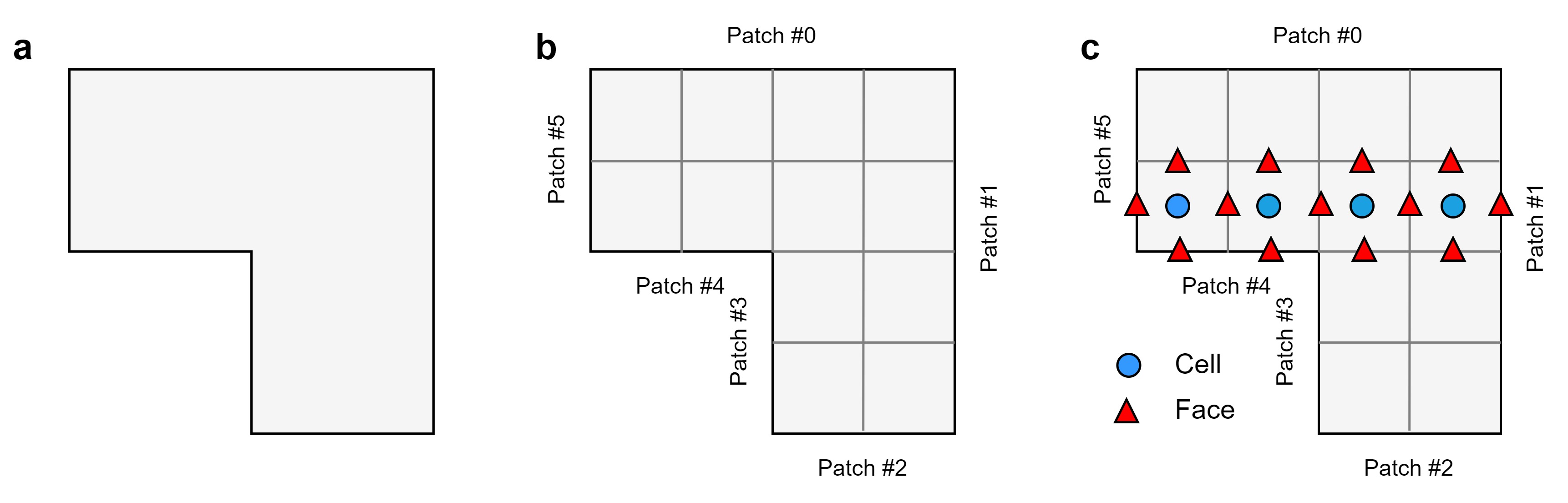}
    \caption{(a) Computational domain. (b) FVM computational mesh (\textit{quadrilateral}). (c) FVM cells and faces.}
    \label{app_fig:fvm_mesh}
\end{figure}

\section{Preconditioned Conjugate Gradient (PCG)}
\label{app_sec:pcg}
The FVM discretization yields a sparse linear system $A \bm u = \bm b$. In our setting, $A$ is large and sparse, and we solve it using the preconditioned conjugate gradient (PCG) method. PCG is an iterative Krylov-subspace solver designed for symmetric positive definite (SPD) systems, and typically achieves fast convergence when combined with an effective preconditioner.

We employ a Jacobi (diagonal) preconditioner $M = D$, where $D=\text{diag}(A)$ is the diagonal of $A$. Given an initial guess $\bm u^0$, initialize
\begin{equation}
    \bm r^0 = \bm b - A \bm u^0, \ \bm z^0 = M^{-1} \bm r^0, \ \bm p^0 = \bm z^0.
\end{equation}

For step $k=0, 1, 2, \dots$, PCG performs step size
\begin{equation}
    \alpha^k = \frac{\bm r^k \cdot \bm z^k}{\bm p^k \cdot A\bm p^k},
\end{equation}
update the solution and residual by
\begin{equation}
    \bm u^{k+1} = \bm u^{k} + \alpha^k \bm p^k, \ \bm r^{k+1} = \bm r^{k} - \alpha^k A \bm p^k,
\end{equation}
then apply the preconditioner and update the direction through
\begin{equation}
    \bm {z}^{k+1} = M^{-1} \bm r^{k+1}, \ \beta^k = \frac{\bm r^{k+1} \cdot \bm z^{k+1}}{\bm r^k \cdot \bm z^k}, \ \bm p^{k+1} = \bm z^{k+1} + \beta^k \bm p^k
\end{equation}
We terminate PCG when the absolute residual norm satisfies $\Vert \bm r^k \Vert_2 \le \epsilon$ or when a maximum iteration budget is reached. In our numerical reference solver, PCG is executed until convergence (with $\epsilon = 1.0\times10^{-8}$) or until a maximum budget of 3000 iterations to produce high-quality solutions. In our iterative physics supervision, we instead run a fixed small number of PCG steps $K$ initialized from the network prediction and use the resulting iterate as a physics-consistent supervision target.

\section{Proofs for Section~\ref{sec:theory}}
\label{app:theory}

\subsection{Preliminaries}

We work in exact arithmetic. Let $F_K(\bm u)$ denote the output of applying $K$ steps of
(preconditioned) conjugate gradients (PCG) to $\bm A\bm u=\bm b$ starting from the initial guess
$\bm u_0=\bm u$, using an SPD preconditioner $\bm M$.
By convention, if the current residual becomes exactly zero at some step, PCG performs no further
updates and the iterate remains unchanged thereafter.

Define the self-consistency residual $\bm s_K(\bm u):=\bm u-F_K(\bm u)$ and the exact solution
$\bm u^\star=\bm A^{-1}\bm b$ (A1). For SPD $\bm A$, define $\|\bm x\|_{\bm A}=\sqrt{\bm x^\top\bm A\bm x}$.
Let $\bm C=\bm M^{-1/2}\bm A\bm M^{-1/2}$ and $\kappa=\kappa(\bm C)$, and define
$\rho=(\kappa-1)/(\kappa+1)\in[0,1)$.

\begin{lemma}[Norm equivalence under symmetric preconditioning]
\label{lem:norm_equiv}
Let $\bm y=\bm M^{1/2}\bm u$ and $\bm y^\star=\bm M^{1/2}\bm u^\star$. Then $\bm y^\star$ solves
$\bm C\bm y=\bm M^{-1/2}\bm b$, and for any $\bm u$,
\[
\|\bm u-\bm u^\star\|_{\bm A}=\|\bm y-\bm y^\star\|_{\bm C}.
\]
\end{lemma}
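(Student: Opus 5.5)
The plan is a direct computation: substitute $\bm u=\bm M^{-1/2}\bm y$ into both the linear system and the energy norm, and use only the symmetry of $\bm M^{-1/2}$ (which holds by (A2)).

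\textbf{Step 1: the transformed system.} Since $\bm M$ is SPD, its square root $\bm M^{1/2}$ and the inverse $\bm M^{-1/2}$ are well defined, symmetric, and invertible. I would compute
\[
\bm C\bm y^\star=\bm M^{-1/2}\bm A\bm M^{-1/2}\bm M^{1/2}\bm u^\star=\bm M^{-1/2}\bm A\bm u^\star=\bm M^{-1/2}\bm b .
\]
So $\bm y^\star$ solves $\bm C\bm y=\bm M^{-1/2}\bm b$. The matrix $\bm C$ is SPD by congruence, because $\bm x^\top\bm C\bm x=(\bm M^{-1/2}\bm x)^\top\bm A(\bm M^{-1/2}\bm x)>0$ for $\bm x\neq\bm 0$. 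Hence $\bm y^\star$ is the unique solution, and $\|\cdot\|_{\bm C}$ is a genuine norm.

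\textbf{Step 2: the norm identity.} Set $\bm e=\bm u-\bm u^\star$. Then $\bm y-\bm y^\star=\bm M^{1/2}\bm e$, and
\[
\|\bm y-\bm y^\star\|_{\bm C}^2=\bm e^\top\bm M^{1/2}\bm M^{-1/2}\bm A\bm M^{-1/2}\bm M^{1/2}\bm e=\bm e^\top\bm A\bm e=\|\bm u-\bm u^\star\|_{\bm A}^2 .
\]
This uses the symmetry of $\bm M^{1/2}$ to transpose. Taking square roots of these nonnegative quantities gives the claimed equality.

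There is no real obstacle. The only point needing care is justifying the existence and symmetry of the SPD square root, which follows from the spectral decomposition $\bm M=\bm Q\bm\Lambda\bm Q^\top$ by setting $\bm M^{1/2}=\bm Q\bm\Lambda^{1/2}\bm Q^\top$.
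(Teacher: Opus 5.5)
Your proof is correct and follows essentially the same route as the paper: substitute $\bm y=\bm M^{1/2}\bm u$ into both the linear system and the quadratic form $(\bm u-\bm u^\star)^\top\bm A(\bm u-\bm u^\star)$, using the symmetry of $\bm M^{\pm 1/2}$. Your extra observations are details the paper leaves implicit: $\bm C$ is SPD by congruence, so $\|\cdot\|_{\bm C}$ is a genuine norm and $\bm y^\star$ is the unique solution.
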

\begin{proof}
We have $\bm A\bm u=\bm b \iff \bm M^{-1/2}\bm A\bm M^{-1/2}\bm y=\bm M^{-1/2}\bm b$, i.e.,
$\bm C\bm y=\bm M^{-1/2}\bm b$.
Moreover,
\[
\begin{aligned}
\|\bm u-\bm u^\star\|_{\bm A}^2 &= (\bm u-\bm u^\star)^\top\bm A(\bm u-\bm u^\star) \\
&= (\bm y-\bm y^\star)^\top \bm M^{-1/2}\bm A\bm M^{-1/2}(\bm y-\bm y^\star) \\
&= \|\bm y-\bm y^\star\|_{\bm C}^2.
\end{aligned}
\]

\end{proof}

\begin{lemma}[Kantorovich inequality]
\label{lem:kantorovich}
Let $\bm C$ be SPD with eigenvalues in $[\lambda_{\min},\lambda_{\max}]$ and $\kappa=\lambda_{\max}/\lambda_{\min}$.
Then for any nonzero $\bm x$,
\[
(\bm x^\top \bm C\bm x)\,(\bm x^\top \bm C^{-1}\bm x)
\le \frac{(\lambda_{\max}+\lambda_{\min})^2}{4\lambda_{\max}\lambda_{\min}}\,(\bm x^\top\bm x)^2
= \frac{(\kappa+1)^2}{4\kappa}\,(\bm x^\top\bm x)^2.
\]
\end{lemma}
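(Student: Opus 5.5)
The plan is to reduce the inequality to a scalar statement about a probability-weighted average of the eigenvalues, and then use a quadratic bound valid on the spectral interval together with AM--GM. Since both sides are homogeneous of degree $4$ in $\bm x$, I will first normalize $\bm x^\top\bm x=1$. I then diagonalize $\bm C=\bm Q\bm\Lambda\bm Q^\top$ with $\bm Q$ orthogonal and $\bm\Lambda=\mathrm{diag}(\lambda_1,\dots,\lambda_N)$, each $\lambda_i\in[\lambda_{\min},\lambda_{\max}]$, $\lambda_{\min}>0$. Setting $\bm w=\bm Q^\top\bm x$ and $p_i=w_i^2$ gives $p_i\ge 0$ and $\sum_i p_i=1$, and
\[
\bm x^\top\bm C\bm x=\sum_i p_i\lambda_i=:a,\qquad \bm x^\top\bm C^{-1}\bm x=\sum_i p_i\lambda_i^{-1}=:b .
\]
So it suffices to show $ab\le (\lambda_{\max}+\lambda_{\min})^2/(4\lambda_{\max}\lambda_{\min})$.

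The key step is a pointwise estimate. For every $\lambda\in[\lambda_{\min},\lambda_{\max}]$ we have $(\lambda-\lambda_{\min})(\lambda-\lambda_{\max})\le 0$. Dividing by $\lambda>0$ and rearranging gives
\[
\lambda+\frac{\lambda_{\min}\lambda_{\max}}{\lambda}\le \lambda_{\min}+\lambda_{\max}.
\]
Averaging this over the eigenvalues with weights $p_i$ yields $a+\lambda_{\min}\lambda_{\max}\,b\le \lambda_{\min}+\lambda_{\max}$.

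Next, AM--GM applied to the two positive numbers $a$ and $\lambda_{\min}\lambda_{\max}b$ gives
\[
2\sqrt{\lambda_{\min}\lambda_{\max}\,ab}\;\le\; a+\lambda_{\min}\lambda_{\max}\,b\;\le\;\lambda_{\min}+\lambda_{\max}.
\]
Squaring and dividing by $4\lambda_{\min}\lambda_{\max}$ produces the claimed bound. Undoing the normalization restores the factor $(\bm x^\top\bm x)^2$. Finally, substituting $\lambda_{\max}=\kappa\lambda_{\min}$ rewrites the constant as $(\kappa+1)^2/(4\kappa)$.

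The main obstacle is finding the right pointwise inequality, namely recognizing that the sign of the quadratic $(\lambda-\lambda_{\min})(\lambda-\lambda_{\max})$ turns the product of two averages into a bound on a single linear combination, $a+\lambda_{\min}\lambda_{\max}b$. After that step, AM--GM and the algebra are routine. The only care needed is to use $\lambda_{\min}>0$ (guaranteed by $\bm C$ being SPD) both when dividing by $\lambda$ and when applying AM--GM.
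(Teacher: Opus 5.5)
Your proposal is correct and matches the paper's proof step for step. Both diagonalize $\bm C$, write the two quadratic forms as weighted averages of $\lambda_i$ and $\lambda_i^{-1}$, average the pointwise bound $\lambda+\lambda_{\min}\lambda_{\max}/\lambda\le\lambda_{\min}+\lambda_{\max}$, and then apply AM--GM and square; the only cosmetic difference is that you normalize $\bm x^\top\bm x=1$ at the outset, whereas the paper keeps $S=\bm x^\top\bm x$ and uses weights $w_i=y_i^2/S$.
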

\begin{proof}
Diagonalize $\bm C=\bm Q\Lambda\bm Q^\top$ and let $\bm y=\bm Q^\top\bm x$.
Write $S=\sum_i y_i^2=\bm x^\top\bm x$ and weights $w_i=y_i^2/S$ so that $\sum_i w_i=1$.
Then
\[
\bm x^\top\bm C\bm x = S\sum_i w_i \lambda_i,\qquad
\bm x^\top\bm C^{-1}\bm x = S\sum_i w_i \lambda_i^{-1}.
\]
For each $\lambda_i\in[\lambda_{\min},\lambda_{\max}]$ we have
$(\lambda_i-\lambda_{\min})(\lambda_i-\lambda_{\max})\le 0$, i.e.
$\lambda_i^2-(\lambda_{\min}+\lambda_{\max})\lambda_i+\lambda_{\min}\lambda_{\max}\le 0$.
Dividing by $\lambda_i>0$ gives
\[
\lambda_i + \frac{\lambda_{\min}\lambda_{\max}}{\lambda_i}\le \lambda_{\min}+\lambda_{\max}.
\]
Averaging w.r.t.\ $w_i$ yields
\[
\sum_i w_i\lambda_i + \lambda_{\min}\lambda_{\max}\sum_i w_i\lambda_i^{-1}\le \lambda_{\min}+\lambda_{\max}.
\]
By AM--GM inequality,
\[
\begin{aligned}
2\sqrt{\lambda_{\min}\lambda_{\max}
\Big(\sum_i w_i\lambda_i\Big)\Big(\sum_i w_i\lambda_i^{-1}\Big)}
&\le
\sum_i w_i\lambda_i
+\lambda_{\min}\lambda_{\max}\sum_i w_i\lambda_i^{-1} \\
&\le \lambda_{\min}+\lambda_{\max}.
\end{aligned}
\]

Squaring and rearranging proves
\[
\Big(\sum_i w_i\lambda_i\Big)\Big(\sum_i w_i\lambda_i^{-1}\Big)
\le \frac{(\lambda_{\min}+\lambda_{\max})^2}{4\lambda_{\min}\lambda_{\max}}.
\]
Multiplying by $S^2$ completes the proof.
\end{proof}

\begin{lemma}[One-step steepest descent contraction]
\label{lem:sd_contraction}
Consider solving $\bm C\bm y=\bm d$ with SPD $\bm C$ by steepest descent with exact line search:
\[
\bm y^+ = \bm y + \alpha \bm r,\qquad \bm r=\bm d-\bm C\bm y,\qquad
\alpha=\frac{\bm r^\top\bm r}{\bm r^\top\bm C\bm r}.
\]
Let $\bm e=\bm y-\bm y^\star$ where $\bm y^\star=\bm C^{-1}\bm d$.
Then
\[
\|\bm e^+\|_{\bm C}\le \rho\,\|\bm e\|_{\bm C},\qquad
\rho=\frac{\kappa(\bm C)-1}{\kappa(\bm C)+1}.
\]
\end{lemma}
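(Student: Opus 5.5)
We prove the bound with the classical energy-norm computation for steepest descent, with the Kantorovich inequality (Lemma~\ref{lem:kantorovich}) supplying the spectral constant. First, if $\bm r=\bm 0$ then $\bm e=\bm 0$, since $\bm C$ is invertible, and there is nothing to prove; so we assume $\bm r\neq\bm 0$. Note that $\bm r=\bm d-\bm C\bm y=-\bm C\bm e$, so $\bm e=-\bm C^{-1}\bm r$ and $\bm e^+=\bm e+\alpha\bm r$.

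We then expand the energy norm of the new error directly:
\[
\|\bm e^+\|_{\bm C}^2=\bm e^\top\bm C\bm e+2\alpha\,\bm r^\top\bm C\bm e+\alpha^2\,\bm r^\top\bm C\bm r .
\]
Using $\bm C\bm e=-\bm r$ and the exact line-search value of $\alpha$, the last two terms combine to $-(\bm r^\top\bm r)^2/(\bm r^\top\bm C\bm r)$. Also $\|\bm e\|_{\bm C}^2=\bm r^\top\bm C^{-1}\bm r$. This gives the standard identity
\[
\|\bm e^+\|_{\bm C}^2=\|\bm e\|_{\bm C}^2\Bigl(1-\frac{(\bm r^\top\bm r)^2}{(\bm r^\top\bm C\bm r)(\bm r^\top\bm C^{-1}\bm r)}\Bigr).
\]

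Next we apply Lemma~\ref{lem:kantorovich} with $\bm x=\bm r$. It bounds the ratio $(\bm r^\top\bm r)^2/\bigl((\bm r^\top\bm C\bm r)(\bm r^\top\bm C^{-1}\bm r)\bigr)$ from below by $4\kappa/(\kappa+1)^2$. Therefore the bracketed factor is at most
\[
1-\frac{4\kappa}{(\kappa+1)^2}=\frac{(\kappa-1)^2}{(\kappa+1)^2}=\rho^2.
\]
Taking square roots yields $\|\bm e^+\|_{\bm C}\le\rho\|\bm e\|_{\bm C}$.

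The only delicate step is the algebra collapsing the cross term and the quadratic term into the single negative term. It hinges on the sign convention $\bm r=-\bm C\bm e$, which must be tracked carefully. Everything else follows directly from the Kantorovich inequality already established.
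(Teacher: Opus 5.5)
Your proposal is correct and follows essentially the same route as the paper: you rewrite $\bm r=-\bm C\bm e$, expand $\|\bm e^+\|_{\bm C}^2$ into $\|\bm e\|_{\bm C}^2\bigl(1-(\bm r^\top\bm r)^2/((\bm r^\top\bm C\bm r)(\bm r^\top\bm C^{-1}\bm r))\bigr)$, and then bound the ratio with the Kantorovich inequality to get $\rho^2$. Your explicit treatment of the case $\bm r=\bm 0$, which the paper omits, is a small but worthwhile addition, since without it $\alpha$ is undefined.
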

\begin{proof}
Since $\bm r=\bm d-\bm C\bm y= -\bm C(\bm y-\bm y^\star)=-\bm C\bm e$,
the update gives $\bm e^+=\bm e-\alpha\bm C\bm e$.
Expanding the $\bm C$-norm,
\[
\|\bm e^+\|_{\bm C}^2
=(\bm e-\alpha\bm C\bm e)^\top\bm C(\bm e-\alpha\bm C\bm e)
=\|\bm e\|_{\bm C}^2 - \frac{(\bm r^\top\bm r)^2}{\bm r^\top\bm C\bm r}.
\]
Also note $\bm r^\top\bm C^{-1}\bm r = \bm e^\top\bm C\bm e=\|\bm e\|_{\bm C}^2$.
Therefore
\[
\frac{\|\bm e^+\|_{\bm C}^2}{\|\bm e\|_{\bm C}^2}
= 1 - \frac{(\bm r^\top\bm r)^2}{(\bm r^\top\bm C\bm r)(\bm r^\top\bm C^{-1}\bm r)}.
\]
By Lemma~\ref{lem:kantorovich},
\[
\begin{aligned}
(\bm r^\top\bm C\bm r)(\bm r^\top\bm C^{-1}\bm r)
&\le \frac{(\kappa+1)^2}{4\kappa}(\bm r^\top\bm r)^2 \\
\Longrightarrow\qquad
\frac{(\bm r^\top\bm r)^2}{(\bm r^\top\bm C\bm r)(\bm r^\top\bm C^{-1}\bm r)}
&\ge \frac{4\kappa}{(\kappa+1)^2}.
\end{aligned}
\]

Hence
\[
\frac{\|\bm e^+\|_{\bm C}^2}{\|\bm e\|_{\bm C}^2}
\le 1-\frac{4\kappa}{(\kappa+1)^2}
=\Big(\frac{\kappa-1}{\kappa+1}\Big)^2=\rho^2,
\]
which implies $\|\bm e^+\|_{\bm C}\le \rho\|\bm e\|_{\bm C}$.
\end{proof}

\begin{lemma}[PCG-$K$ contraction in the energy norm]
\label{lem:pcg_contraction}
Assume A1--A2. Then for any initial guess $\bm u$,
\[
\|F_K(\bm u)-\bm u^\star\|_{\bm A}\le \rho^K\,\|\bm u-\bm u^\star\|_{\bm A}.
\]
\end{lemma}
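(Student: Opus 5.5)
We prove Lemma~\ref{lem:pcg_contraction} by moving to the symmetrically preconditioned variables of Lemma~\ref{lem:norm_equiv}. There PCG on $\bm A\bm u=\bm b$ becomes plain CG on $\bm C\bm y=\bm M^{-1/2}\bm b$. We then compare each CG step with a steepest-descent step, whose contraction is given by Lemma~\ref{lem:sd_contraction}.

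\emph{Step 1: change of variables.} Set $\bm y_k=\bm M^{1/2}\bm u_k$ and $\hat{\bm r}_k=\bm M^{-1/2}\bm r_k$, where $\bm r_k=\bm b-\bm A\bm u_k$, and $\hat{\bm p}_k=\bm M^{1/2}\bm p_k$. We check by direct substitution in the recursions of Appendix~\ref{app_sec:pcg} that these are exactly the iterates of unpreconditioned CG on $\bm C\bm y=\bm d$, $\bm d=\bm M^{-1/2}\bm b$, started at $\bm y_0=\bm M^{1/2}\bm u$. The verification uses
\begin{align*}
\bm r_k^\top\bm z_k&=\bm r_k^\top\bm M^{-1}\bm r_k=\hat{\bm r}_k^\top\hat{\bm r}_k,\\
\bm p_k^\top\bm A\bm p_k&=\hat{\bm p}_k^\top\bm C\hat{\bm p}_k .
\end{align*}
By Lemma~\ref{lem:norm_equiv}, $\|\bm u_k-\bm u^\star\|_{\bm A}=\|\bm y_k-\bm y^\star\|_{\bm C}$ for every $k$. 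It therefore suffices to show that CG contracts the $\bm C$-energy error by at least $\rho$ per step.

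\emph{Step 2: per-step comparison with steepest descent.} We use the standard optimality property of CG: $\bm y_{k+1}$ minimizes $\|\bm y-\bm y^\star\|_{\bm C}$ over the affine space $\bm y_k+\operatorname{span}\{\hat{\bm r}_k,\hat{\bm p}_{k-1}\}$. This is a consequence of the $\bm C$-conjugacy $\hat{\bm p}_k^\top\bm C\hat{\bm p}_{k-1}=0$ and the exact line search along $\hat{\bm p}_k$. The global Krylov version also holds: $\bm y_k$ minimizes the error over $\bm y_0+\mathcal K_k(\bm C,\hat{\bm r}_0)$, and $\bm y_k+\alpha\hat{\bm r}_k$ lies in $\bm y_0+\mathcal K_{k+1}$. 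Either version shows that the steepest-descent point $\bm y_k+\alpha_{\mathrm{SD}}\hat{\bm r}_k$ is admissible for the minimization that defines $\bm y_{k+1}$. Hence
\[
\|\bm y_{k+1}-\bm y^\star\|_{\bm C}\le\|\bm y_k+\alpha_{\mathrm{SD}}\hat{\bm r}_k-\bm y^\star\|_{\bm C}\le\rho\,\|\bm y_k-\bm y^\star\|_{\bm C},
\]
where the last inequality is Lemma~\ref{lem:sd_contraction}.

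\emph{Step 3: conclusion.} Iterating the per-step bound $K$ times and translating back through Lemma~\ref{lem:norm_equiv} gives the claim. If the residual vanishes at some step, the iterate is already $\bm u^\star$ and, by the stated convention, stays there, so the bound holds trivially.

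The main obstacle is Step 2: justifying the Krylov or two-term optimality of CG rigorously within the paper. The quickest route is to cite the classical Galerkin characterization $\bm y_k=\arg\min_{\bm y\in\bm y_0+\mathcal K_k}\|\bm y-\bm y^\star\|_{\bm C}$. If a self-contained argument is wanted, we would prove by induction that the residuals are mutually orthogonal and the directions are mutually $\bm C$-conjugate, which yields the Galerkin condition directly. Step 1 is a mechanical but necessary bookkeeping check.
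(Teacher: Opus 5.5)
Your proposal is correct and follows essentially the paper's route: reduce PCG to plain CG on $\bm C\bm y=\bm M^{-1/2}\bm b$, use CG's Krylov optimality to dominate it by steepest descent, then apply Lemma~\ref{lem:sd_contraction} and translate back with Lemma~\ref{lem:norm_equiv}. The only difference is minor. You compare one steepest-descent step taken from each CG iterate, which gives a per-step contraction, whereas the paper compares $K$ CG steps with $K$ steepest-descent steps from the same initial guess; both arguments are valid.
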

\begin{proof}
Under SPD $\bm M$, standard symmetrically-preconditioned CG (PCG) on $(\bm A,\bm M)$ is equivalent
to applying (unpreconditioned) CG to the SPD system $\bm C\bm y=\bm M^{-1/2}\bm b$ in the variable
$\bm y=\bm M^{1/2}\bm u$.
CG produces the $\bm C$-norm optimal error over the $K$-step Krylov subspace; in particular, its
error after $K$ iterations is no worse than that of $K$ steps of steepest descent with exact line search.
By Lemma~\ref{lem:sd_contraction}, steepest descent contracts the $\bm C$-norm error by at most $\rho$
per step, hence after $K$ steps:
\[
\|\bm y_K-\bm y^\star\|_{\bm C}\le \rho^K\|\bm y_0-\bm y^\star\|_{\bm C}.
\]
Finally apply Lemma~\ref{lem:norm_equiv} to translate back to $\bm A$-energy norms:
$\|\bm y_K-\bm y^\star\|_{\bm C}=\|F_K(\bm u)-\bm u^\star\|_{\bm A}$ and
$\|\bm y_0-\bm y^\star\|_{\bm C}=\|\bm u-\bm u^\star\|_{\bm A}$.
\end{proof}

\subsection{Proof of Fixed-point Consistency}

\begin{theorem}[Fixed-point consistency]
\label{thm:fixed_point}
Assume A1--A2 and $K\ge 1$. Then for any $\bm u\in\mathbb R^N$,
\[
F_K(\bm u)=\bm u\quad\Longleftrightarrow\quad \bm A\bm u=\bm b.
\]
\end{theorem}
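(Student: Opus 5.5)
We split the claimed equivalence into its two directions. Both rest on Lemma~\ref{lem:pcg_contraction} together with the stated convention that PCG stops updating once the residual is exactly zero.

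\emph{($\Leftarrow$)} Suppose $\bm A\bm u=\bm b$. Then the initial residual is $\bm r^0=\bm b-\bm A\bm u=\bm 0$, so the PCG recursion of Appendix~\ref{app_sec:pcg} is never asked to divide by $\bm p^0\cdot\bm A\bm p^0=0$. By the convention in the Preliminaries, no update is performed, and every iterate equals $\bm u$. Hence $F_K(\bm u)=\bm u$.

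As a cross-check that does not depend on the convention, Lemma~\ref{lem:pcg_contraction} gives
\[
\|F_K(\bm u)-\bm u^\star\|_{\bm A}\le\rho^K\|\bm u-\bm u^\star\|_{\bm A}=0 .
\]
Since $\bm A$ is SPD, $\|\cdot\|_{\bm A}$ is a norm, so $F_K(\bm u)=\bm u^\star=\bm u$.

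\emph{($\Rightarrow$)} Suppose $F_K(\bm u)=\bm u$ and write $\bm e=\bm u-\bm u^\star$. Lemma~\ref{lem:pcg_contraction} yields
\[
\|\bm e\|_{\bm A}=\|F_K(\bm u)-\bm u^\star\|_{\bm A}\le\rho^K\|\bm e\|_{\bm A},
\]
that is, $(1-\rho^K)\|\bm e\|_{\bm A}\le 0$. Because $\kappa\ge1$, we have $\rho=(\kappa-1)/(\kappa+1)\in[0,1)$, and with $K\ge1$ this gives $\rho^K<1$. Therefore $\|\bm e\|_{\bm A}=0$. Positive definiteness of $\bm A$ (A1) then forces $\bm e=\bm 0$, i.e.\ $\bm u=\bm u^\star$ and $\bm A\bm u=\bm b$.

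\textbf{Main obstacle.} The delicate point is not the algebra above but whether Lemma~\ref{lem:pcg_contraction} applies to every starting point, including degenerate ones. Two cases need checking:
\begin{itemize}
\item CG terminates early, because the residual vanishes before step $K$.
\item $\bm r^0\neq\bm 0$ but the iteration would break down.
\end{itemize}
The second case cannot occur. If $\bm r^0\neq\bm 0$, then $\bm z^0=\bm M^{-1}\bm r^0\neq\bm 0$ by (A2), so $\bm p^0\cdot\bm A\bm p^0>0$ by (A1). The first step is therefore well defined.

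In fact the ($\Rightarrow$) direction can be made transparent from this first step alone. The first PCG step is a preconditioned steepest-descent step with exact line search. It strictly decreases the energy error whenever $\bm r^0\neq\bm 0$, because the decrease equals $(\bm r^0\cdot\bm z^0)^2/(\bm p^0\cdot\bm A\bm p^0)>0$. CG's later steps minimise the $\bm A$-norm error over a nested sequence of Krylov spaces, so they never increase the energy error. Hence $\|F_K(\bm u)-\bm u^\star\|_{\bm A}<\|\bm u-\bm u^\star\|_{\bm A}$, which makes $F_K(\bm u)=\bm u$ impossible. I would include this remark so that the argument is robust even if one worries about the early-termination convention used inside Lemma~\ref{lem:pcg_contraction}.
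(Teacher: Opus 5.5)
Your proof is correct, but it handles the nontrivial direction differently from the paper. The paper's argument is purely algebraic. If $\bm r_0\neq\bm 0$ then $\alpha_0>0$. It writes $F_K(\bm u)-\bm u=\sum_{i=0}^{K-1}\alpha_i\bm p_i$ and pairs this with $\bm p_0^\top\bm A$. The $\bm A$-conjugacy of the search directions then isolates $\alpha_0\,\bm p_0^\top\bm A\bm p_0>0$, so the sum cannot vanish. This uses no convergence theory at all.

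Your main argument instead feeds a fixed point into Lemma~\ref{lem:pcg_contraction} and obtains $(1-\rho^K)\|\bm u-\bm u^\star\|_{\bm A}\le 0$. This is exactly the lower bound of Theorem~\ref{thm:error_proxy} evaluated at $\bm s_K(\bm u)=\bm 0$. So your route shows fixed-point consistency is a corollary of the quantitative error-proxy estimate. The price is the heavier machinery behind that lemma: Krylov optimality of CG relative to steepest descent, and the Kantorovich inequality. There is no circularity, because the lemma's proof never uses the fixed-point property; only the proof of Theorem~\ref{thm:stop_grad} invokes $F_K(\bm u^\star)=\bm u^\star$.

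Your backup remark is the most elementary of the three arguments. The first step gives a strict energy decrease of $(\bm r^0\cdot\bm z^0)^2/(\bm p^0\cdot\bm A\bm p^0)$, and the energy error is then monotone by optimality over nested Krylov spaces. This needs no condition-number bound, handles early termination automatically, and is essentially the energy-norm counterpart of the paper's conjugacy argument.

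One small correction: your ``convention-free cross-check'' for the easy direction is not actually convention-free. With $\bm r^0=\bm 0$ the first step size is $0/0$, so $F_K(\bm u)$ is defined only through the stated convention. The lemma merely confirms that any admissible definition must return $\bm u^\star$.
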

\begin{proof}
($\Rightarrow$) Suppose $\bm A\bm u=\bm b$. Then the initial residual is $\bm r_0=\bm b-\bm A\bm u=\bm 0$.
By the definition of PCG and our convention, no further updates are performed, hence $F_K(\bm u)=\bm u$.

($\Leftarrow$) Suppose $\bm A\bm u\neq \bm b$, i.e.\ $\bm r_0=\bm b-\bm A\bm u\neq \bm 0$.
In PCG, the first preconditioned residual is $\bm z_0=\bm M^{-1}\bm r_0\neq \bm 0$
(because $\bm M$ is nonsingular SPD), and the first search direction is $\bm p_0=\bm z_0$.
The first step size is
\[
\alpha_0=\frac{\bm r_0^\top\bm z_0}{\bm p_0^\top\bm A\bm p_0}.
\]
Since $\bm M$ and $\bm A$ are SPD, $\bm r_0^\top\bm z_0=\bm r_0^\top\bm M^{-1}\bm r_0>0$
and $\bm p_0^\top\bm A\bm p_0>0$, hence $\alpha_0>0$.
Therefore, the first iterate updates as $\bm u_1=\bm u+\alpha_0\bm p_0\neq \bm u$.
When $K\ge 1$, we have
$\bm u_K=\bm u+\sum_{i=0}^{K-1}\alpha_i\bm p_i$.
If $\bm u_K=\bm u$, then $\sum_{i=0}^{K-1}\alpha_i\bm p_i=\bm 0$.
Left-multiplying by $\bm p_0^\top\bm A$ and using the $A$-conjugacy
$\bm p_0^\top\bm A\bm p_i=0$ for $i\ge 1$, we obtain
$0=\alpha_0\,\bm p_0^\top\bm A\bm p_0$, contradicting $\alpha_0>0$ and SPD of $\bm A$.
Hence $\bm u_K\neq \bm u$. Therefore, $F_K(\bm u)=\bm u$ implies $\bm r_0=\bm 0$, i.e.\ $\bm A\bm u=\bm b$.
\end{proof}

\subsection{Proof of Stop-gradient Contraction}

\begin{theorem}[Stop-gradient contraction]
\label{thm:stop_grad}
Assume A1--A2 and $K\ge 1$. Consider the iteration
$\bm u_{t+1}=\bm u_t-\eta\,\bm s_K(\bm u_t)=(1-\eta)\bm u_t+\eta F_K(\bm u_t)$ with $\eta\in(0,1]$.
Let $\bm e_t=\bm u_t-\bm u^\star$. Then
\[
\|\bm e_{t+1}\|_{\bm A}\le \bigl((1-\eta)+\eta\rho^K\bigr)\,\|\bm e_t\|_{\bm A}
= \bigl(1-\eta(1-\rho^K)\bigr)\,\|\bm e_t\|_{\bm A}.
\]
\end{theorem}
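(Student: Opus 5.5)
The plan is to write the error recursion explicitly and then apply the triangle inequality in the energy norm together with Lemma~\ref{lem:pcg_contraction}.

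Subtracting $\bm u^\star$ from both sides of the update $\bm u_{t+1}=(1-\eta)\bm u_t+\eta F_K(\bm u_t)$, and using $(1-\eta)+\eta=1$ to split $\bm u^\star$ accordingly, gives
\[
\bm e_{t+1}=(1-\eta)(\bm u_t-\bm u^\star)+\eta\bigl(F_K(\bm u_t)-\bm u^\star\bigr).
\]
This expresses the new error as a convex combination of the current error and the post-PCG error.

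Since $\|\cdot\|_{\bm A}$ is a norm under (A1) and $\eta\in(0,1]$ makes both coefficients nonnegative, the triangle inequality yields
\[
\|\bm e_{t+1}\|_{\bm A}\le(1-\eta)\|\bm e_t\|_{\bm A}+\eta\,\|F_K(\bm u_t)-\bm u^\star\|_{\bm A}.
\]
Lemma~\ref{lem:pcg_contraction} applies with initial guess $\bm u_t$ and bounds the second term by $\eta\rho^K\|\bm e_t\|_{\bm A}$. Collecting coefficients gives $(1-\eta)+\eta\rho^K=1-\eta(1-\rho^K)$.

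There is no real obstacle here. The substantive work is already in Lemma~\ref{lem:pcg_contraction}, which in turn rests on CG optimality and the Kantorovich-based bound for steepest descent.

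Two details should be checked. First, Lemma~\ref{lem:pcg_contraction} must hold for an arbitrary starting point $\bm u_t$, including the degenerate case where $\bm u_t=\bm u^\star$ or the residual vanishes before step $K$. In that case the stated convention freezes the iterate, so the bound holds trivially. Second, the nonnegativity of $1-\eta$ must be used so that the coefficient comes out of the norm without absolute values. Finally, $\rho<1$ ensures the factor lies in $[0,1)$, so the iteration is a genuine contraction.
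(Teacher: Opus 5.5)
Your proposal is correct and follows the paper's proof essentially verbatim. The paper also writes $\bm e_{t+1}=(1-\eta)\bm e_t+\eta(F_K(\bm u_t)-\bm u^\star)$, applies the triangle inequality in the $\bm A$-norm, and bounds the second term by Lemma~\ref{lem:pcg_contraction}; your splitting of $\bm u^\star$ via $(1-\eta)+\eta=1$ is all that identity actually needs, even though the paper cites $F_K(\bm u^\star)=\bm u^\star$.
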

\begin{proof}
Using $\bm u^\star=F_K(\bm u^\star)$ and the update definition,
\[
\begin{aligned}
\bm e_{t+1}=\bm u_{t+1}-\bm u^\star
&=(1-\eta)(\bm u_t-\bm u^\star)+\eta\big(F_K(\bm u_t)-\bm u^\star\big) \\
&=(1-\eta)\bm e_t+\eta\big(F_K(\bm u_t)-\bm u^\star\big).
\end{aligned}
\]

Taking $\bm A$-norms and applying the triangle inequality gives
\[
\|\bm e_{t+1}\|_{\bm A}\le (1-\eta)\|\bm e_t\|_{\bm A}+\eta\|F_K(\bm u_t)-\bm u^\star\|_{\bm A}.
\]
Finally apply Lemma~\ref{lem:pcg_contraction} to bound
$\|F_K(\bm u_t)-\bm u^\star\|_{\bm A}\le \rho^K\|\bm e_t\|_{\bm A}$, yielding the claim.
\end{proof}

\subsection{Proof of the Error-proxy Theorem}

\begin{theorem}[Self-consistency residual as an error proxy]
\label{thm:error_proxy}
Assume A1--A2 and $K\ge 1$. Then for any $\bm u$,
\[
(1-\rho^K)\,\|\bm u-\bm u^\star\|_{\bm A}
\le \|\bm s_K(\bm u)\|_{\bm A}
\le (1+\rho^K)\,\|\bm u-\bm u^\star\|_{\bm A}.
\]
Furthermore, letting $\bm r(\bm u)=\bm A\bm u-\bm b$, we have
\[
\lambda_{\min}(\bm A)\,\|\bm u-\bm u^\star\|_{\bm A}
\le \|\bm r(\bm u)\|_{\bm A}
\le \lambda_{\max}(\bm A)\,\|\bm u-\bm u^\star\|_{\bm A}.
\]
\end{theorem}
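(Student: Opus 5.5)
The first pair of inequalities follows from the triangle inequality in the $\bm A$-norm together with Lemma~\ref{lem:pcg_contraction}. Write
\[
\bm s_K(\bm u)=\bm u-F_K(\bm u)=(\bm u-\bm u^\star)-\bigl(F_K(\bm u)-\bm u^\star\bigr).
\]
Lemma~\ref{lem:pcg_contraction} gives $\|F_K(\bm u)-\bm u^\star\|_{\bm A}\le\rho^K\|\bm u-\bm u^\star\|_{\bm A}$. The triangle inequality then yields the upper bound
\[
\|\bm s_K(\bm u)\|_{\bm A}\le\|\bm u-\bm u^\star\|_{\bm A}+\|F_K(\bm u)-\bm u^\star\|_{\bm A}\le(1+\rho^K)\|\bm u-\bm u^\star\|_{\bm A}.
\]
The reverse triangle inequality gives the lower bound
\[
\|\bm s_K(\bm u)\|_{\bm A}\ge\|\bm u-\bm u^\star\|_{\bm A}-\|F_K(\bm u)-\bm u^\star\|_{\bm A}\ge(1-\rho^K)\|\bm u-\bm u^\star\|_{\bm A}.
\]
The coefficient $1-\rho^K$ is nonnegative because $\rho\in[0,1)$, so the bound is meaningful. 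The case $\bm u=\bm u^\star$ is trivial, since Theorem~\ref{thm:fixed_point} gives $\bm s_K(\bm u^\star)=\bm 0$.

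For the residual bounds, note that $\bm r(\bm u)=\bm A\bm u-\bm b=\bm A(\bm u-\bm u^\star)=\bm A\bm e$ with $\bm e:=\bm u-\bm u^\star$. Hence $\|\bm r(\bm u)\|_{\bm A}^2=\bm e^\top\bm A^3\bm e$ while $\|\bm e\|_{\bm A}^2=\bm e^\top\bm A\bm e$. Diagonalize $\bm A=\bm Q\Lambda\bm Q^\top$ with orthogonal $\bm Q$ and set $\bm y=\bm Q^\top\bm e$. Then
\[
\|\bm r(\bm u)\|_{\bm A}^2=\sum_i\lambda_i^3y_i^2,\qquad \|\bm e\|_{\bm A}^2=\sum_i\lambda_iy_i^2.
\]
Since $\lambda_{\min}^2\le\lambda_i^2\le\lambda_{\max}^2$ and each weight $\lambda_iy_i^2$ is nonnegative, comparing term by term gives
\[
\lambda_{\min}(\bm A)^2\|\bm e\|_{\bm A}^2\le\|\bm r(\bm u)\|_{\bm A}^2\le\lambda_{\max}(\bm A)^2\|\bm e\|_{\bm A}^2.
\]
Taking square roots proves the second pair of inequalities.

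Neither step poses a real obstacle, since the substantive ingredient, the $\rho^K$ contraction of PCG in the energy norm, is already Lemma~\ref{lem:pcg_contraction}. The one point to check is that Lemma~\ref{lem:pcg_contraction} also covers iterates where PCG terminates early (exact zero residual). Under the stated convention the iterate is then $\bm u^\star$ and stays there, so the bound still holds.
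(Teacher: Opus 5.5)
Your proposal is correct and matches the paper's proof. The first pair of bounds comes from the triangle and reverse triangle inequalities combined with Lemma~\ref{lem:pcg_contraction}. The residual bounds come from writing $\bm r(\bm u)=\bm A\bm e$ and comparing $\bm e^\top\bm A^3\bm e$ with $\bm e^\top\bm A\bm e$ in an eigenbasis of $\bm A$, which is the paper's argument via $\lambda_{\min}(\bm A)^2\bm A\preceq\bm A^3\preceq\lambda_{\max}(\bm A)^2\bm A$.
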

\begin{proof}
Let $\bm e=\bm u-\bm u^\star$ and $\bm e_K=F_K(\bm u)-\bm u^\star$. Then
$\bm s_K(\bm u)=\bm u-F_K(\bm u)=\bm e-\bm e_K$.
By the triangle inequality and Lemma~\ref{lem:pcg_contraction},
\[
\|\bm s_K(\bm u)\|_{\bm A}\le \|\bm e\|_{\bm A}+\|\bm e_K\|_{\bm A}
\le (1+\rho^K)\|\bm e\|_{\bm A}.
\]
Similarly, by the reverse triangle inequality,
\[
\|\bm s_K(\bm u)\|_{\bm A}\ge \bigl|\|\bm e\|_{\bm A}-\|\bm e_K\|_{\bm A}\bigr|
\ge (1-\rho^K)\|\bm e\|_{\bm A}.
\]
This proves the first inequality.

For the residual bound, note $\bm r(\bm u)=\bm A\bm u-\bm b=\bm A(\bm u-\bm u^\star)=\bm A\bm e$.
Then
\[
\|\bm r(\bm u)\|_{\bm A}^2
=\bm r^\top\bm A\bm r
=(\bm A\bm e)^\top\bm A(\bm A\bm e)
=\bm e^\top \bm A^3 \bm e.
\]
Let the eigenvalues of $\bm A$ lie in $[\lambda_{\min}(\bm A),\lambda_{\max}(\bm A)]$.
In an eigenbasis of $\bm A$, each component satisfies
$\lambda_{\min}(\bm A)^2\,\lambda_i \le \lambda_i^3 \le \lambda_{\max}(\bm A)^2\,\lambda_i$,
which implies the matrix inequality
$\lambda_{\min}(\bm A)^2\,\bm A \preceq \bm A^3 \preceq \lambda_{\max}(\bm A)^2\,\bm A$.
Therefore,
\[
\lambda_{\min}(\bm A)^2\,\bm e^\top\bm A\bm e
\le \bm e^\top\bm A^3\bm e
\le \lambda_{\max}(\bm A)^2\,\bm e^\top\bm A\bm e.
\]
Taking square roots yields
\[
\lambda_{\min}(\bm A)\,\|\bm e\|_{\bm A}
\le \|\bm r(\bm u)\|_{\bm A}
\le \lambda_{\max}(\bm A)\,\|\bm e\|_{\bm A}.
\]
\end{proof}

\subsection{Derivation for the Local Expansions in Remark}
\label{app:stopgrad_remark}

Assume $F_K$ is Fr\'echet differentiable at $\bm u^\star$ with Jacobian $\bm J=\nabla F_K(\bm u^\star)$.
For $\bm u=\bm u^\star+\bm e$ and small $\bm e$,
\[
\begin{aligned}
F_K(\bm u)
&=F_K(\bm u^\star+\bm e)
=\bm u^\star+\bm J\bm e + O\!\left(\|\bm e\|_2^2\right), \\
\bm s_K(\bm u)
&=\bm u-F_K(\bm u)
=(\bm I-\bm J)\bm e + O\!\left(\|\bm e\|_2^2\right).
\end{aligned}
\]

For the stop-gradient update $\bm u^+=\bm u-\eta\,\bm s_K(\bm u)$,
\[
\bm e^+=\bm u^+-\bm u^\star
=\bm e-\eta(\bm I-\bm J)\bm e+O(\|\bm e\|_2^2)
=\bigl((1-\eta)\bm I+\eta\bm J\bigr)\bm e+O(\|\bm e\|_2^2).
\]
For the full-gradient update on $\phi(\bm u)=\tfrac12\|\bm s_K(\bm u)\|_2^2$,
$\nabla\phi(\bm u)=(\nabla \bm s_K(\bm u))^\top \bm s_K(\bm u)$ and
$\nabla \bm s_K(\bm u^\star)=\bm I-\bm J$, hence
\[
\begin{aligned}
\nabla\phi(\bm u)
&=(\bm I-\bm J)^\top(\bm I-\bm J)\bm e
+ O\!\left(\|\bm e\|_2^2\right), \\
\bm e^+
&=\Bigl(\bm I-\eta(\bm I-\bm J)^\top(\bm I-\bm J)\Bigr)\bm e
+ O\!\left(\|\bm e\|_2^2\right).
\end{aligned}
\]

\section{More Details in Experiments}
\label{app_sec:exp}
The hyperparameters about the number of model parameters, epochs, batch size, and learning rate for training \textsc{NPSolver} and baselines across different tasks are summarized in Table~\ref{app_tab:super_params}. The default setting of PCG steps $K$ for \textsc{NPSolver} is 40. All source code and data will be made available after peer review.

\begin{table*}[!t]
    \centering
    \caption{The number of model parameters, epochs, batch size, and learning rate for training \textsc{NPSolver} across different tasks.}
    \begin{tabular}{cccccccc}
    \toprule
         Task  & Method & Params. & Epochs & Batch size & LR \\
         \midrule
         \multirow{4}{*}{2D Dirichlet} & \textsc{PI-DeepONet} & 3.6M & 400 & 100 & 0.001 \\
         & \textsc{PINO} & 3.5M & 400 & 8 & 0.001 \\
         & \textsc{PINN} & 0.2M & 11000 & 1 & 0.001 \\
          & \textsc{NPSolver} & 3.6M &  400 & 8 & 0.001 \\
          \midrule
         \multirow{4}{*}{2D Neumann} & \textsc{PI-DeepONet} & 3.6M & 600 & 100 & 0.0005 \\
         & \textsc{PINO} & 3.5M & 600 & 8 & 0.0005 \\
         & \textsc{PINN} & 0.2M & 11000 & 1 & 0.0005 \\
          & \textsc{NPSolver}& 3.6M & 600 & 8 & 0.0005 \\
          \midrule
         \multirow{7}{*}{2D RandomBC} & \textsc{Transolver}& 3.5M &  1000 & 8 & 0.0005 \\
         & \textsc{Transolver++}& 2.9M &  1000 & 8 & 0.0005 \\
         & \textsc{MGN}& 2.2M &  1000 & 4 & 0.0005 \\
         & \textsc{GPS}& 2.6M &  1000 & 4 & 0.0005 \\
         & \textsc{PointNet++}& 3.5M &  1000 & 8 & 0.0005 \\
         & {\textsc{BENO}} & 3.4M & 1000 & 6 & 0.0005 \\
          & \textsc{NPSolver}& 3.6M &  1000 & 8 & 0.0005 \\
        \midrule
         \multirow{2}{*}{3D}
         & {\textsc{Transolver}} & 4.7M & 400 & 1 & 0.0005\\
         & \textsc{NPSolver}& 4.9M & 400 & 1 & 0.0005\\
         \midrule
         Control & \textsc{NPSolver}& 3.6M &  1000 & 8 & 0.0005\\
         \bottomrule
    \end{tabular}
    \label{app_tab:super_params}
\end{table*}
\subsection{2D Generalization under Different BCs}
\label{app_sec:2d}

\begin{figure}
    \centering
    \includegraphics[width=\linewidth]{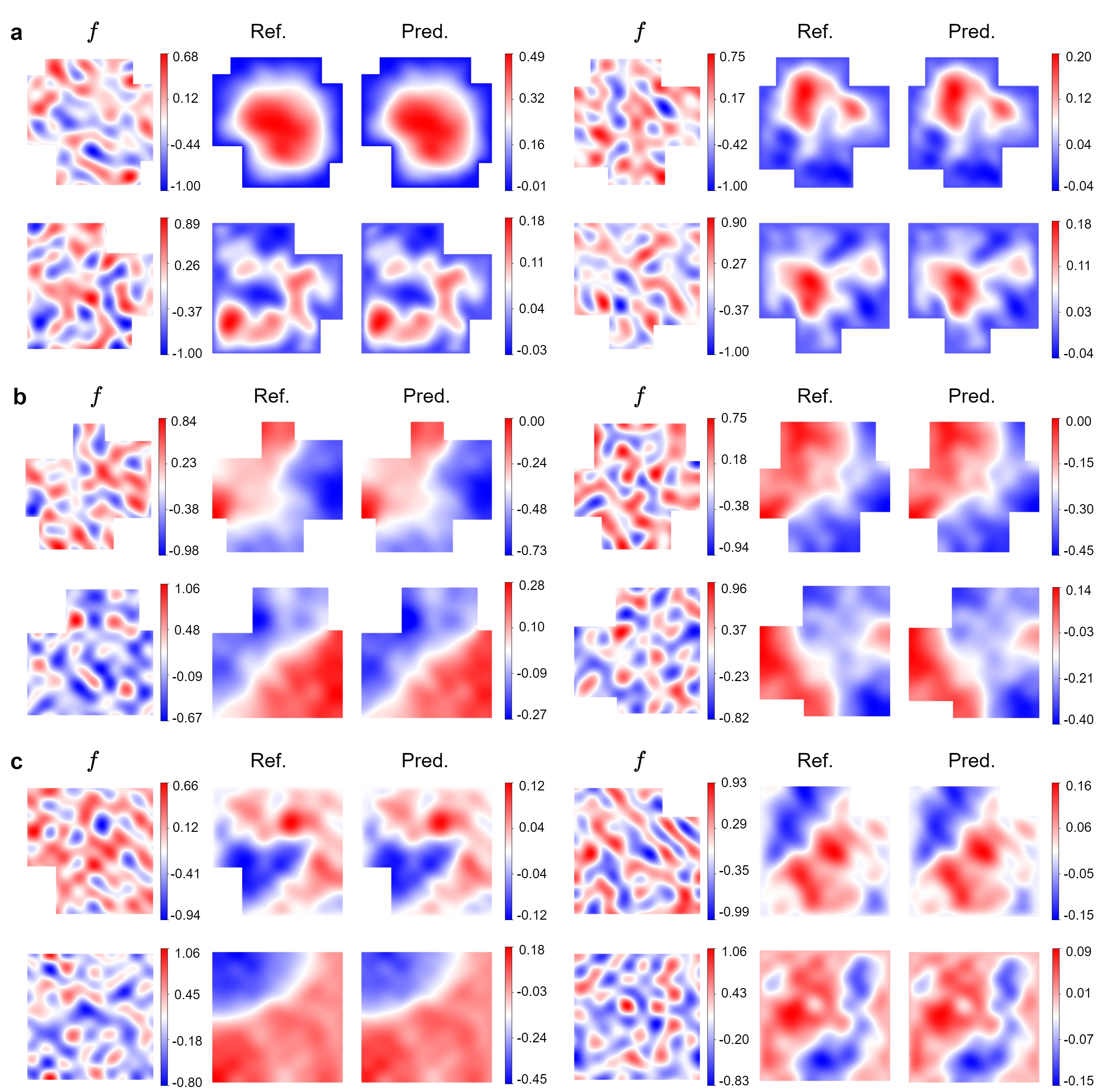}
    \caption{Visualization of \textsc{NPSolver}'s predictions for representative samples. (a) All-Dirichlet case (C4 and C2). (b) All-Neumann case (C4 and C2). (c) RandomBC case (C1 and C0).}
    \label{app_fig:2d}
\end{figure}

\hspace*{\parindent}\textbf{Geometry.} Following BENO~\cite{wang2024beno}, we construct a corner-removed square family with five categories, denoted as {C$k$}, where $k \in \{0, \dots, 4\}$ is the number of removed rectangular corners (C0 corresponds to the intact square). The base domain is defined on $[0, L]^2$ with $L=2\pi$ and a base resolution $128\times128$. For each domain geometry, the width and height of the removed corner rectangles are sampled independently from a uniform distribution over $[0.1L, 0.4L]$. To evaluate both in-distribution and out-of-distribution generalization ability, we \emph{train} the model only on the most irregular category C4 and \emph{test} it on all categories C0-C4.  We prepare 100 domain instances forming the training geometry set and 10 domain instances per category forming the testing geometry set. The average number of cells per category is summarized in Table~\ref{app_tab:2d_cells}.

\textbf{Forcing distribution.}
Given 3D spatial coordinates $\{\bm x_n\}_{n=1}^{N}$ with $\bm x_n=(x_n,y_n,z_n)$ at cell centroids of control volumes $\{ V_n \}^N_{n=1}$, where $z_n = 0$ for 2D, define the frequency index set
\[
\mathcal{K}=\left\{(i,j,k)\ \big|\ i,j,k\in\{0,1,\dots,M-1\}\right\},
\]
and let $A_{ijk},B_{ijk}\overset{\text{i.i.d.}}{\sim}\mathcal{N}(0,1)$ for $(i,j,k)\in\mathcal{K}$, and $c\sim\mathcal{N}(-1,1)$.
For each centroid $\bm x_n$, define the phase
\[
\phi_{ijk}(\bm x_n) = \left(i-\left\lfloor\frac{M}{2}\right\rfloor\right)x_n
+\left(j-\left\lfloor\frac{M}{2}\right\rfloor\right)y_n
+\left(k-\left\lfloor\frac{M}{2}\right\rfloor\right)z_n,
\]
and construct
\[
W(\bm x_n)=\sum_{i=0}^{M-1}\sum_{j=0}^{M-1}\sum_{k=0}^{M-1}
\Big( A_{ijk}\sin\!\big(\phi_{ijk}(\bm x_n)\big)+B_{ijk}\cos\!\big(\phi_{ijk}(\bm x_n)\big)\Big).
\]
Add the bias term:
\[
f_0(\bm x_n)=W(\bm x_n)+c.
\]
The returned field is
\[
f(\bm x_n)=\frac{f_0(\bm x_n)}{\max_{1\le m\le N}\big|f_0(\bm x_m)\big|+\varepsilon}.
\]
In practice, we choose $M=10$ to ensure sufficient spectral richness. For the pure zero-flux Neumann case, in order to satisfy the compatibility condition $\int_\Omega f = 0$, we normalize $f$ by
\[
\bar f \;=\; \frac{\sum_{n=1}^{N} f(\bm x_n)\, \vert V_n \vert}{\sum_{n=1}^{N} \vert V_n \vert},
\qquad
f(\bm x_n)^{\mathrm{norm}} \;=\; f(\bm x_n) - \bar f,
\]
where $\vert V_i \vert$ is the volume for each control volume $V_i$.

For evaluation, we pair 50 test domain instances (10 per category) with 10 independent forcing realizations each. This yields 100 test samples per category and 500 total test samples per case, with ground-truth solutions provided by the numerical solver.

\textbf{Implementation details of baselines.} To ensure a fair and comprehensive comparison, we adapt three representative physics-informed baselines to our irregular domain and variable BC settings. All models, including \textsc{NPSolver}, are trained using identical optimization protocols: the Adam optimizer with a OneCycle learning-rate schedule for an equal number of epochs.
\begin{itemize}[leftmargin=12pt]
    \item \textsc{PI-DeepONet}: The standard branch net architecture implicitly assumes a fixed sensor configuration, making it incompatible with irregular domains of varying node counts. To address this, we interpolate the irregular input fields onto a fixed $128 \times 128$ Cartesian grid to serve as the branch net input. The trunk net then queries coordinates from the original irregular mesh.
    \item \textsc{PINO}: Since the FNO backbone requires regular grids, we perform bi-directional interpolation and masking between the irregular physical domain and a latent regular grid for the network's input and output. Crucially, the finite-volume equation residuals are still computed on the original irregular mesh to maintain physical consistency during training.
    \item \textsc{PINN}: Unlike the models above, \textsc{PINN} is optimized from scratch for each test sample. The optimization follows a two-stage strategy: an initial 1,000 Adam iterations for global search, followed by 10,000 L-BFGS iterations to ensure fine-scale convergence. While providing a high-accuracy reference, this process incurs a prohibitive computational cost during inference. We evaluate \textsc{PINN} on a representative subset by selecting three samples from each category. The average performance across these selections is reported as the final estimate for the category.
    \item Data-driven baselines (\textsc{Transolver}, \textsc{Transolver++}, \textsc{MGN}, \textsc{GPS}, \textsc{PointNet++}, {\textsc{BENO}}): These data-driven baselines are inherently suited for modeling boundary-value problems on irregular geometries. For a fair comparison, all models are trained for the same number of epochs as \textsc{NPSolver}. Due to GPU memory constraints, the batch size is set to 8 for \textsc{Transolver}, \textsc{Transolver++}, \textsc{PointNet++}, and \textsc{NPSolver}, 4 for \textsc{MGN} and \textsc{GPS}, {and 6 for \textsc{BENO}}. Notably, the number of iterations per epoch for \textsc{NPSolver} is fixed at 100, whereas for other models, it scales with the dataset size (e.g., 125 iterations per epoch for 1k samples with a batch size of 8). This configuration intentionally favors the data-driven baselines by granting them a higher frequency of parameter updates, further underscoring the efficiency of our approach.
\end{itemize}

\begin{table}[!t]
    \centering
    \caption{Average number of cells per category in the 2D computational domain.}
    \label{app_tab:2d_cells}
    \begin{tabular}{cccccc}
        \toprule
        & C4 & C3 & C2 & C1 & C0  \\
        \midrule
        Avg. Cells & 12567 & 13806 & 14678 & 15245 & 16384 \\   
        \bottomrule
    \end{tabular}
\end{table}

\subsection{3D Poisson on Cube-with-Cylindrical-Hole}
\label{app_sec:3d}
\hspace*{\parindent}\textbf{Geometry and BCs.} We construct a 3D domain family by subtracting a cylindrical cavity from a cube $[0, L]^3$ with $L=2\pi$, discretized at a base resolution of $48^3$. The cylinder's diameter $d$ is sampled uniformly from $[0.1L,0.3L]$, and its center is randomized such that the cylinder remains fully contained within the cube. We prepare 40 domain instances forming the training geometry set and 10 domain instances forming the testing geometry set. The average number of cells of the testing geometry set is about 100,420. Regarding BCs, we impose zero-flux Neumann BCs on the outer cube surfaces and Dirichlet BCs with zero value on the internal cylindrical surface. This mixed-BC configuration couples an internal Dirichlet surface with an outer Neumann boundary, creating a challenging solution landscape.

\textbf{Forcing distribution.} The distribution of the forcing field in 3D is identical to that used in 2D.

For evaluation, we pair 10 test domain instances with 10 independent forcing realizations each, yielding 100 test samples with ground-truth solutions provided by the numerical solver.

\subsection{Thermal Control on Perforated Plate}
\label{app_sec:control}
\begin{figure}[!t]
    \centering
    \includegraphics[width=0.7\linewidth]{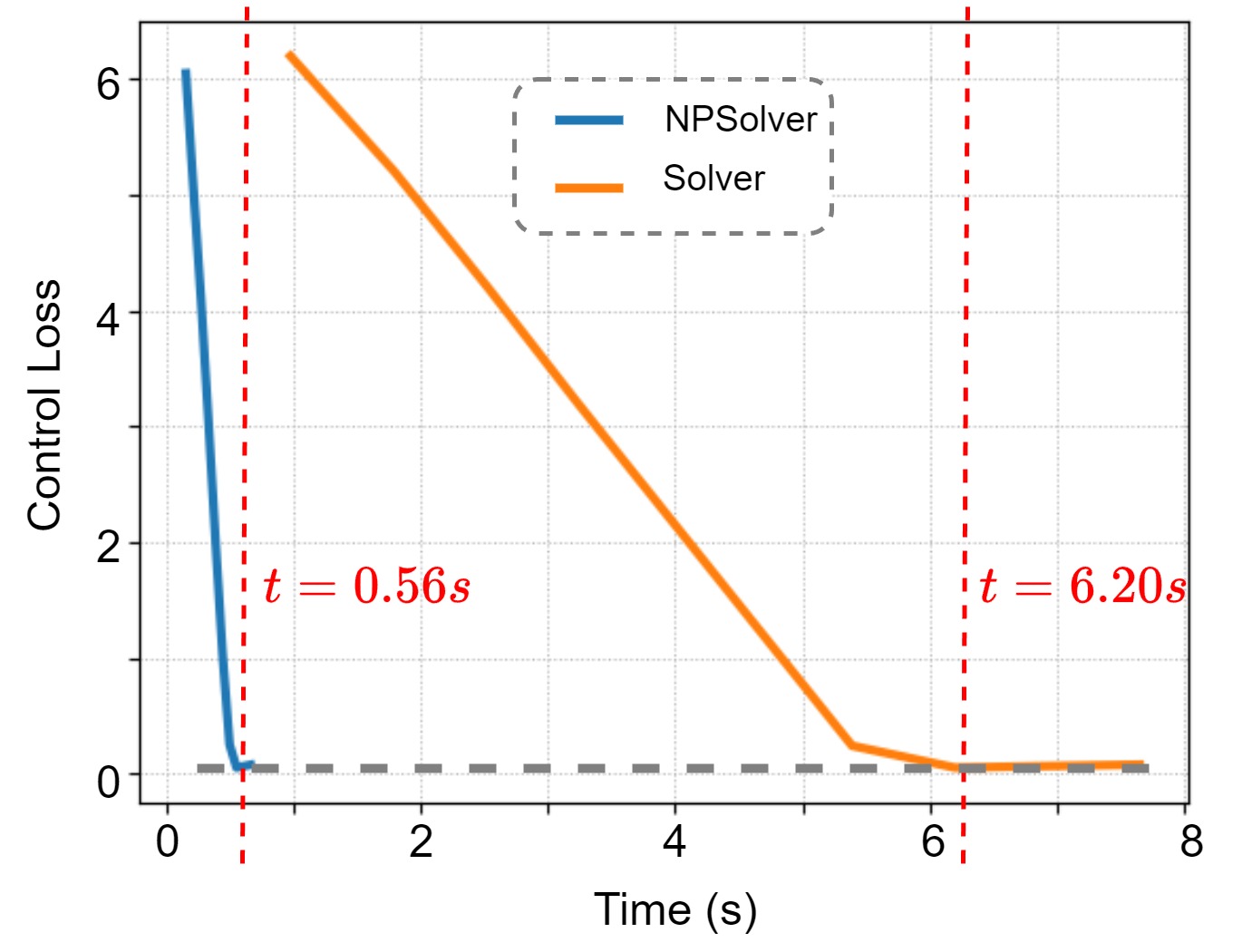}
    \caption{Control-loss convergence curve for a representative sample in the thermal control task.}
    \label{fig:control_curve}
\end{figure}

\hspace*{\parindent}\textbf{Geometry and BCs.} We construct a family of 2D domains by subtracting a single rectangular cavity from a square domain $[0, L]^2$ with $L=2\pi$, discretized at a base resolution of $128^2$. For each domain geometry, the cavity's width and height are sampled independently from $[0.1L,0.4L]$, with its center sampled such that the hole remains fully contained within the square. We prepare 100 domain instances forming the training geometry set and 10 domain instances forming the testing geometry set. The average number of cells of the testing geometry set is about 15,080. Controllable cooling is applied through four independently controlled Dirichlet segments on the bottom boundary, with values parameterized by a vector $\bm c = (c_1, c_2, c_3, c_4)$. All other boundaries are assigned zero-flux Neumann BCs, modeling insulated surfaces.   

\textbf{Forcing distribution.} To simulate randomly distributed heating elements, the source term $f$ is sampled from the specially designed distribution, generated as a superposition of localized Gaussian "hot spots" with randomized centers, amplitudes, and spatial spreads. Given 2D spatial coordinates $\{ \bm x_i \}_{i=1}^n$ with $\bm x_i=(x_i,y_i)\in [0, L^2]$,
we generate a nonnegative heat-source field $f:\{\bm x_i\}\to\mathbb{R}_{\ge 0}$ as a sum of $K$ Gaussian hot spots:

1) Number of hot spots:
\[
K \sim \mathrm{Unif}\{K_{\min}, K_{\max}\}.
\]

2) Hot-spot centers:
we sample $K$ indices $\{c_k\}_{k=1}^K$ i.i.d. from $\{1,\dots,n\}$, and set
\[
\bm \mu_k = \bm x_{c_k} = (\mu_{k,x},\mu_{k,y}).
\]

3) Amplitudes and widths:
\[
A_k \sim \mathrm{Unif}(a_{\min}, a_{\max}), \quad s_k \sim \mathrm{Unif}(s_{\min}, s_{\max}),
\]
and we use isotropic widths
\[
\sigma_{k,x}=\sigma_{k,y}=s_k.
\]

4) Unnormalized field:
for each point $\bm x_i=(x_i,y_i)$,
\[
f_0(\bm x_i)
=
\sum_{k=1}^K
A_k \exp\!\left(
-\frac{1}{2}\left(\frac{x_i-\mu_{k,x}}{\sigma_{k,x}+\varepsilon}\right)^2
-\frac{1}{2}\left(\frac{y_i-\mu_{k,y}}{\sigma_{k,y}+\varepsilon}\right)^2
\right),
\quad \varepsilon=10^{-8}.
\]

5) Mean-power scaling:
define the sample mean
\[
\bar f_0 = \frac{1}{n}\sum_{i=1}^n f_0(\bm x_i).
\]
Then the returned heat-source field is scaled:
\[
f(\bm x_i)
=
f_0(\bm x_i)\cdot
\frac{1}{\bar f_0+\varepsilon}.
\]
In practice, we choose $K_{\min}=2, K_{\max}=6, a_{\min}=0.5, a_{\max}=2.0, s_{\min}=0.02L, s_{\max}=0.08L, L=2\pi$ to produce spatially sparse, inhomogeneous sources that resemble heat injection from discrete components.

For evaluation, we pair 10 test domain instances with 10 independent forcing and randomized Dirichlet values $\bm c$, yielding a total of 100 test samples with ground-truth solutions provided by the numerical solver.

\section{Additional Analysis and  Experiments}
\label{app_sec:analysis}

\begin{figure}
    \centering
    \includegraphics[width=0.7\linewidth]{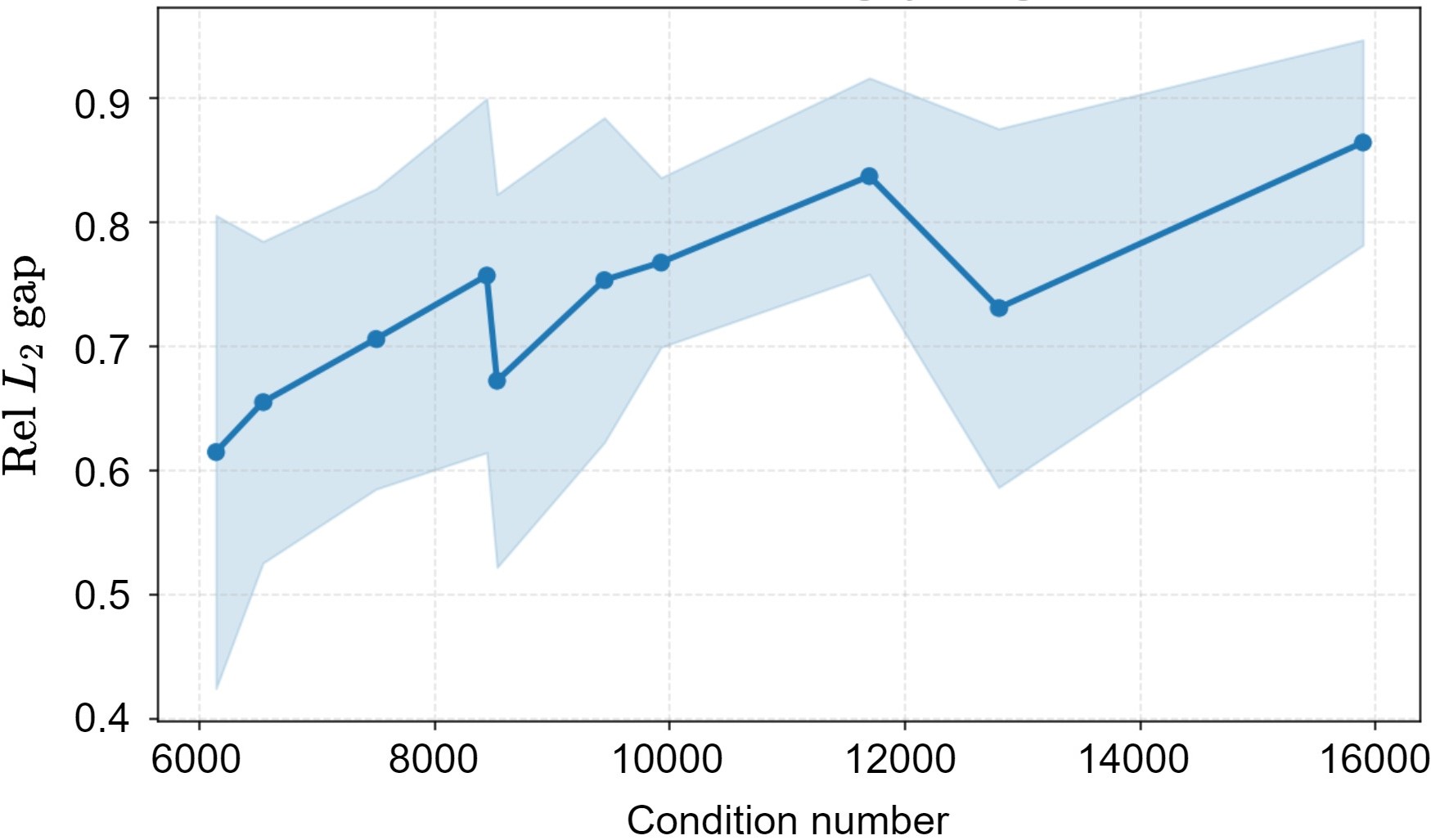}
    \caption{{Relationship between the estimated condition number and the relative $L_2$ error gap between residual supervision and iterative supervision on C4.}}
    \label{app_fig:cond_analysis}
\end{figure}
\textbf{Condition-number analysis.} We further analyze the relationship between the estimated condition number and the performance gap between residual supervision and iterative supervision. Since the model is trained on C4 and tested on C0–C4, cross-category comparisons mix conditioning and OOD geometry generalization. We therefore restrict this analysis to samples within the C4 family. As shown in Fig~\ref{app_fig:cond_analysis}, the relative $L_2$ gap shows an overall increasing trend with the estimated condition number. The Pearson correlation between the estimated condition number and the relative $L_2$ gap is 0.84, supporting our claim that residual supervision is more sensitive to ill-conditioning while iterative supervision provides a better-scaled signal.

\begin{table}[t]
    \centering
    \caption{{Relative $L_2$ errors (\%) on variable-coefficient Poisson equation in the All-Dirichlet BCs.}}
    \label{app_tab:variable_poisson}
    \begin{tabular}{cccccc}
         \toprule
         Method & C4 & C3 & C2 & C1 & C0  \\
         \midrule
         \textsc{PINO} & 25.18 & 24.66 & 29.09 & 30.73 & 32.88 \\
         \textsc{NPSolver} & \textbf{1.83} & \textbf{3.07} & \textbf{3.82} & \textbf{4.75} & \textbf{5.50} \\
         \bottomrule
    \end{tabular}
    \label{tab:placeholder}
\end{table}
\textbf{Variable-coefficient Poisson equation.} We add experiments on the variable-coefficient Poisson equation $\nabla \cdot (D(\bm x)\nabla u (\bm x))=f (\bm x)$ in the All-Dirichlet BCs setting. Following the setup in Section~\ref{sec:2d_exp}, the model is trained on the most irregular category C4 and tested on C0–C4, where it must simultaneously generalize across the geometry domain $\Omega$, the coefficient field $D(\bm x)$, and the forcing field $f(\bm x)$. As shown in Table~\ref{app_tab:variable_poisson}, \textsc{NPSolver} outperforms \textsc{PINO} by a large margin on all test categories, confirming that our framework also works well for variable-coefficient Poisson equations.

\begin{figure}
    \centering
    \includegraphics[width=\linewidth]{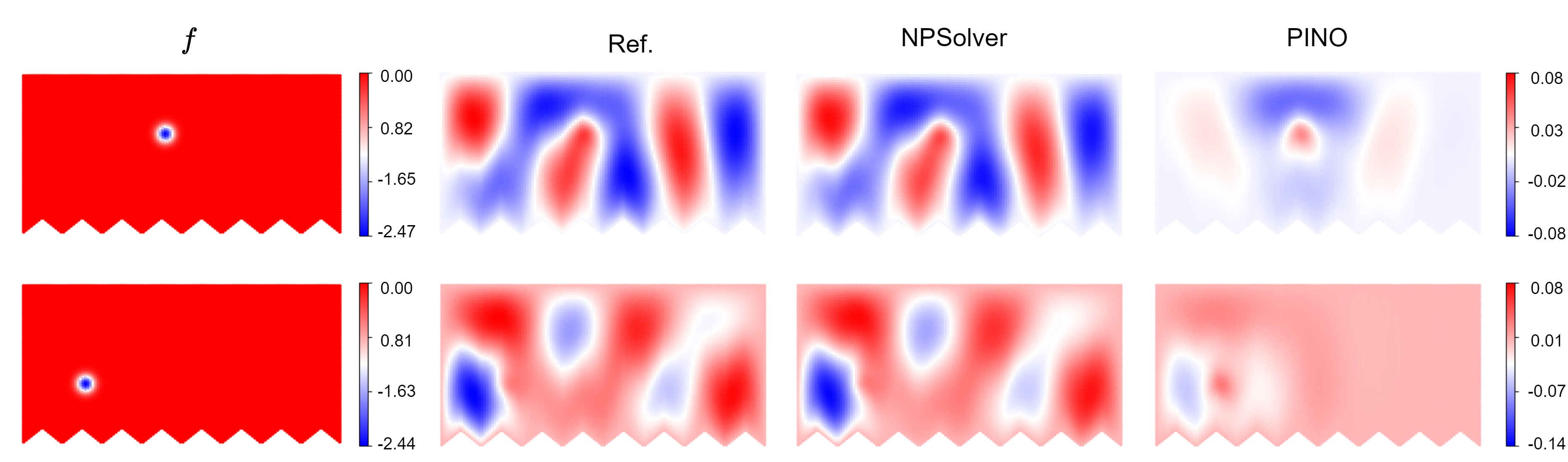}
    \caption{Visualization of models' predictions for representative samples.}
    \label{app_fig:2d_helm}
\end{figure}
\textbf{Helmholtz equation.} We further broaden the empirical evaluation with the 2D Helmholtz equation, following The Well Benchmark~\cite{ohana2024well}. Specifically, we consider $\Delta u+\omega^2u=-\delta _{x_0}$ on a rectangular region with a jagged lower boundary, whose bounding box is $[0, 2\pi]\times[0, \pi]$ and discretized with approximately $128\times64$ nodes and Dirichlet BCs. The task requires generalization across varying point-source right-hand sides $-\delta_{x_0}$. We train \textsc{NPSolver} in the same label-free manner via iterative physics supervision and compare it with \textsc{PINO} as a representative baseline. On this Helmholtz case, NPSolver achieves a relative $L_2$ error of 3.37\%, whereas \textsc{PINO} obtains 87.60\%. A representative prediction snapshot is shown in Fig.~\ref{app_fig:2d_helm}. This experiment provides additional evidence that the core iterative supervision idea is not limited to the exact Poisson problem considered in the main paper.

\end{document}